%% file: main.tex
\documentclass{article} 
\usepackage{iclr2027_conference,times}

\iclrfinalcopy
\input{math_commands.tex}

\usepackage{amsthm}

\usepackage{hyperref}
\usepackage[capitalise]{cleveref}
\usepackage{url}

\definecolor{BSUb}{RGB}{0,50,160}     
\definecolor{BSUb2}{RGB}{0,30,96}     
\definecolor{BSUb3}{RGB}{0,114,206}   
\definecolor{BSUo}{RGB}{250,70,22}    
\definecolor{BSUo2}{RGB}{255,130,0}   

\newtheorem{theorem}{Theorem}

\newtheorem{example}{Example}
\newtheorem{remark}{Remark}

\usepackage{amsmath}
\usepackage{amssymb}
\usepackage{nicefrac}       
\usepackage[dvipsnames]{xcolor}
\usepackage{colortbl}
\usepackage{subcaption}
\usepackage{graphicx}
\usepackage{algorithm}
\usepackage{algpseudocode}

\usepackage{booktabs}       
\usepackage{multirow}
\usepackage{tikz}
\usetikzlibrary{arrows.meta,positioning,matrix,calc}
\usepackage{wrapfig}

\definecolor{bestgray}{gray}{0.85}
\newcommand{\first}[1]{\textcolor{YellowOrange}{\mathbf{#1}}}
\newcommand{\second}[1]{\textcolor{CadetBlue}{\mathbf{#1}}}
\newcommand{\third}[1]{\textcolor{RawSienna}{\mathbf{#1}}}
\usepackage{placeins}

\newcommand{\val}[2]{$#1$\,{\small$\pm#2$}}
\newcommand{\bval}[2]{$\mathbf{#1}$\,{\small$\bm{\pm#2}$}}
\newcommand{\uval}[2]{\underline{$#1$\,{\small$\pm#2$}}}

\title{
\centering
Path Laplacian Encodings\\[-0.1em]
for Directed Graphs\\[0.5em]
}

\author{
\parbox{0.96\textwidth}{\centering\normalfont
\textbf{Lydia Mezrag}$^{1,2}$
\qquad
\textbf{Semih Cantürk}$^{1,2}$
\qquad
\textbf{Michael Perlmutter}$^{3}$
\\[0.25em]
\textbf{Bastian Rieck}$^{4}$
\qquad
\textbf{Guy Wolf}$^{1,2}$
\\[0.8em]
\small
$^{1}$Université de Montréal
\qquad
$^{2}$Mila -- Quebec AI Institute
\\[0.25em]
$^{3}$Boise State University
\qquad
$^{4}$Université de Fribourg
\\[0.5em]
\footnotesize
\texttt{lydia.mezrag@umontreal.ca}
\qquad
\texttt{semih.canturk@mila.quebec}
\qquad
\texttt{mperlmutter@boisestate.edu}
\\
\texttt{bastian.grossenbacher@unifr.ch}
\qquad
\texttt{wolfguy@mila.quebec}
}
}

\begin{document}

\maketitle
\lhead{Preprint}

\begin{abstract}
  Directed graphs naturally model many real-world systems in which interactions are \emph{asymmetric}, such as citation networks, web graphs, and information-flow networks. However, graph learning methods commonly rely on message passing with symmetrized graph representations or positional encodings that only partially exploit edge directionality. We introduce \emph{PathLapPE}, a novel spectral positional encoding (PE) derived from the path Laplacian on directed graphs. PathLapPE provides node- and edge-level features that encode directional higher-order structure and can be incorporated into standard graph learning architectures. Empirical results on node- and graph-level benchmark tasks show that PathLapPE yields consistent improvements across several architectures, especially when combined with direction-aware message passing.
  Compared with magnetic Laplacian positional encodings, a widely studied spectral positional encoding for directed graphs, PathLapPE does not require additional fine-tuning of directionality hyperparameters while offering competitive runtime and performance.
\end{abstract}

\section{Introduction}
Many real-world relational systems are naturally represented by directed graphs: citations point from one document to another, hyperlinks encode navigation, circuits and programs contain logical dependencies, and biological or transportation networks often involve intrinsically asymmetric interactions. In such settings, edge orientation is not a secondary attribute but an integral part of the semantic content of the data. Nevertheless, a large fraction of graph neural network architectures either assume undirected graphs or preprocess directed graphs by replacing each directed edge with an undirected one. This naive symmetrization can destroy key information for downstream tasks.

Several approaches have been proposed to incorporate directionality into graph learning architectures. 
These include modifying message passing so that information flowing along incoming and outgoing edges is processed differently \citep[Dir-GNN,][]{rossi2024edge}, utilizing directed Laplacians \citep[MagNet,][]{zhang2021magnet}, and injecting directed information through positional or structural encodings (PSE), attention biases, or edge features \citep{geisler2023transformers,huang2025good}.  

Positional and structural encodings have become central tools in modern graph learning by enriching node and edge representations~\citep{dwivedi2022graph,canturk2024graph,huang2024stability}. Positional encodings typically aim to distinguish where nodes lie in the graph, for instance through Laplacian eigenvectors, random-walk features, distances, or resistance-based quantities. Structural encodings, in contrast, describe the role or local environment of a node or edge, such as degrees, orbit counts, local subgraph statistics, or other invariants. 

Despite substantial progress on positional encodings for undirected graphs, direction-aware positional encodings remain comparatively underdeveloped. Existing directed encodings include directed walk encodings~\citep{geisler2023transformers}, walk profiles~\citep{huang2025good}, and magnetic Laplacian encodings \citep{geisler2023transformers,huang2025good}. These methods have demonstrated that direction-aware encodings can significantly improve the expressivity of graph Transformers and GNNs on tasks where orientation matters. However, magnetic-Laplacian-based encodings also come with practical limitations. Their behavior is sensitive to the choice of the magnetic phase parameter $q$, and strong performance often requires either careful tuning of this parameter or the concatenation of encodings associated with several values of $q$. While multi-$q$ constructions improve the ability of magnetic encodings to represent directed walk patterns, they also require several complex-valued spectral decompositions and increase the size of the resulting positional encoding. This creates a trade-off between directional expressivity and computational simplicity.


In this work, we propose \emph{PathLapPE}, a positional encoding for directed
graphs constructed from the \emph{path Laplacians} of the directed path
complex. Instead of encoding direction through complex phases, PathLapPE uses
the algebraic structure of directed paths to produce real-valued node- and edge-level spectral features. Our contributions are as follows:

\begin{enumerate}
    \item We introduce \emph{PathLapPE}, a spectral PE derived from the path complex of a directed graph. It provides node- and edge-level features, with the latter encoding both edge orientation and directed 2-path structure. Unlike magnetic Laplacian PEs, it requires no phase parameter tuning or multi-\(q\) stacking. 
    It can be easily incorporated into standard GNN and graph Transformer architectures. To the best of our knowledge, this is the first direction-aware positional encoding for graph learning based on the path Laplacian.

    \item We introduce \emph{EdgeSPE}, an extension of SPE~\citep{huang2024stability} to edge-level encodings, which allows for performance gains with edge-level PathLapPE features.  

    \item We show that PathLapPE captures directionality and consistently matches or outperforms other spectral PEs on an array of node- and graph-level directed benchmarks across several GNN architectures, with the largest gains for direction-aware models such as bidirectional and edge-gated GNNs.

\end{enumerate}

\section{Related Work}

\paragraph{Bidirectional graph neural networks.}
Classical message passing neural networks (MPNNs) update node representations by aggregating features from local neighborhoods. Architectures such as GCN~\citep{kipf2017semi}, GAT~\citep{velickovic2018graph}, and GIN~\citep{xu2019how} differ in their choices of aggregation and update functions and usually assume an undirected graph or operate on a symmetrized version of the input. This can be problematic for directed graphs, where incoming and outgoing edges may encode different types of relationships.

Recent work has therefore proposed message passing schemes that explicitly preserve edge orientation. Bidirectional (or directed) GNN adapts standard message passing architectures to directed graphs by aggregating messages from in-neighbors and out-neighbors separately~\citep{jaume2019edgnnsimplepowerfulgnn,huang2025good,rossi2024edge}. Given a graph $G=(V,\vec E)$ and node features $\{\mathbf h_v\}_{v \in V}$, the features for the next layer $\{\mathbf h_v\}_{v \in V}$ after message passing can be written as

\begin{equation*}
    \mathbf h_v' =
    \alpha\, \psi_{\mathrm{in}}\Big(\mathbf h_v,
    \operatorname*{AGG}_{u \in \mathcal{N}_{\mathrm{in}}(v)} \phi_{\mathrm{in}}(\mathbf h_u, \mathbf h_v)\Big)
    + (1-\alpha)\, \psi_{\mathrm{out}}\Big(\mathbf h_v,
    \operatorname*{AGG}_{u \in \mathcal{N}_{\mathrm{out}}(v)} \phi_{\mathrm{out}}(\mathbf h_u, \mathbf h_v)\Big),
\end{equation*}

where the $\alpha$ parameter controls the weight assigned to each direction. Other directed or edge-aware variants, including GAT- and GIN-type architectures with edge features, can also exploit orientation-dependent information when it is made available through the input representation. Our work is complementary to these architectural approaches: rather than modifying the message passing scheme, we construct direction-aware spectral features that encode higher-order relations that these message passing schemes can better leverage.

\paragraph{Spectral and directed positional encodings.}
Positional encodings enrich node or edge representations with information about their location in the graph. Spectral positional encodings are commonly obtained from eigenvectors of a graph operator, most often the Laplacian. Given a Laplacian \(\mathbf L\), a \(k\)-dimensional Laplacian positional encoding (LapPE) assigns to each node \(v\) the vector $\operatorname{PE}(v) = \left(\mathbf u_1(v),\ldots,\mathbf u_k(v)\right)$,
where $\mathbf u_1,\ldots, \mathbf u_k$ are the first $k$ eigenvectors of $\mathbf L$ (i.e., those corresponding to the smallest eigenvalues).
These encodings capture global geometric information that may not be available to purely local message passing layers. However, Laplacian eigenvectors come with sign and basis ambiguities: each eigenvector is defined only up to sign, and eigenspaces associated with repeated eigenvalues are defined only up to a change of orthonormal basis. This has motivated invariant and equivariant approaches to spectral encodings, such as SignNet, BasisNet~\citep{lim2023sign}, and SPE~\citep{huang2024stability}.

For directed graphs, positional encodings must also account for edge orientation. Existing approaches include directed walk profiles and magnetic-Laplacian-based encodings. The magnetic Laplacian is a complex Hermitian operator whose phases encode edge direction, and it has been used both in directed spectral neural architectures such as MagNet and as a source of direction-aware positional encodings~\citep{zhang2021magnet,geisler2023transformers}. Recent multi-\(q\) magnetic Laplacian (Multi-$q$-MagLapPE) encodings concatenate spectral features obtained from several choices of the magnetic phase parameter \(q\), improving the ability to represent directed walk profiles~\citep{huang2025good}. This additional expressivity, however, comes with the need to tune, compute, and store several complex-valued spectral decompositions and increases the size of the resulting encoding.

Our method, PathLapPE, is similar in spirit to these magnetic approaches but differs in its underlying Laplacian. Instead of encoding orientation through complex phases, it uses the path Laplacians associated with the directed path complex~\citep{grigor2022advances}. This yields real-valued encodings that can be used at both the node and edge levels.

\paragraph{Path complexes and directed motifs in graph learning.}
Higher-order directed structure has entered graph learning in two main ways.
MotifNet~\citep{monti2018motifnet} defines convolutions from symmetric node-level Laplacians of a predefined set of directed motifs, without lifting the graph. A second line of work lifts the graph to a path complex~\citep{truong2024weisfeiler}\footnote{The path complex of
\citet{truong2024weisfeiler} is based on simple paths in simple graphs, with a
path identified with its reverse, and differs from the directed path complex
used here.}, and runs message passing on that domain, in analogy with simplicial
and cellular architectures. However, such higher-order message-passing schemes are computationally expensive due to the processing of higher-order tensors~\citep{ppgn}, and the theoretical expressivity guarantees provided by higher-order MPNNs typically fail to account for the practical limitations of GNNs, leading only to modest performance gains at a significant computational cost~\citep{kemper2025what}.


Our approach differs from both. PathLapPE does not require choosing motifs, as directed higher-order structures arise naturally from the boundary maps of the path complex. In contrast to lifted architectures, we use the path Laplacian only to compute node- and edge-level positional encodings; the downstream model then runs standard message passing on the original graph, so it receives directional and higher-order information from the path complex without the cost of higher-order message passing.

\section{From directed graphs to positional encodings}\label{sec:methods}

\subsection{Path Laplacian}
Path Laplacians are a family of Laplace-type operators for directed graphs, derived from the boundary maps of the path complex. We briefly recall here the construction of path complexes and path Laplacians associated with directed graphs, and refer the reader to \citet{grigor2022advances} and \citet{grigor2014homotopy} for more details.   

We consider a directed graph \(G=(V,\vec{E})\) 
and assume throughout that $G$ has no self-loops. An \emph{elementary \(p\)-path} on \(V\) is an ordered sequence of vertices \(v_0 v_1 \ldots v_p\), where repetitions are allowed (and we do not require that $(v_i,v_{i+1})\in \vec{E}$). We call a \emph{\(p\)-path} any finite formal sum
$
\sigma = \sum_{v_0,\ldots,v_p \in V} \lambda_{v_0\ldots v_p} \, v_0\ldots v_p,
$ 
with coefficients \(\lambda_{v_0\ldots v_p} \in \mathbb{R}\). We define the vector space $\Lambda_p$ as the space of all $p$-paths where elements of the form $v_iv_i$ are identified with the zero vector. 
An elementary \(p\)-path \(v_0v_1 \ldots v_p\) is called \emph{allowed} if \((v_i,v_{i+1})\) is a directed edge in \(\vec E\) for all \(0 \leq i \leq p-1\). 
We denote by \(\mathcal{A}_p\) the vector space spanned by the allowed \(p\)-paths. For any \(p>0\), we define the boundary map $\partial_p: \Lambda_p \rightarrow \Lambda_{p-1}$ by first defining
\begin{equation}\label{eq:bdry_Lambda}
    \partial_p(v_0\ldots v_p)
    =
    \sum_{k=0}^{p} (-1)^k
    v_0\ldots v_{k-1}v_{k+1}\ldots v_p,
\end{equation}
for elementary $p$-paths and then extending $\partial_p$
to all of $\Lambda_p$ by linearity. We set $\Lambda_{-1}= \mathcal{A}_{-1}= \{0\}$ to be the zero vector space and let $\partial_{0} : \Lambda_0 \rightarrow \Lambda_{-1}$ be the zero map. These maps satisfy $\partial_{p-1}\circ \partial_p = 0$,
and therefore define a chain complex (see \citet[Lemma 2.1]{grigor2020path})
\[
\cdots \xrightarrow{\partial_{p+1}} \Lambda_p
\xrightarrow{\partial_p} \Lambda_{p-1}
\xrightarrow{\partial_{p-1}} \cdots.
\]

\begin{example}\label{ex: directed triangle under stand allowed paths}
Consider the directed triangle graph shown  in Figure~\ref{fig:directed-motifs}(a). The graph consists of three vertices or 0-paths $a,b$, and $c$. Any ordered sequence of the vertices defines an elementary path, for instance, the ordered pair $ca$ is an elementary $1$-path. However, $ca$ is not an allowed path because $(c,a)\notin \vec{E}$. This graph has exactly three  allowed $1$-paths corresponding to the three edges in $\vec{E}$, $u:=ab, v:=bc$, and $w:=ac$. The only allowed $2$-path is given by $t := abc$, which can be obtained by concatenating the directed edges $(a,b)$ and $(b,c)$. To understand the definition of $\mathcal{A}_1$, we observe that $u$ and $v$ are allowed 1-paths and so the (non-elementary) $1$-path defined by $u+2v$ is an element of $\mathcal{A}_1$, the vector space spanned by allowed $1$-paths, even though it is not itself an allowed $1$-path.      
\end{example}

The following example illustrates that, in general, the boundary of an allowed path is not necessarily an  allowed path and  that the spaces \(\mathcal{A}_p\) do not necessarily form a chain complex.
\begin{example}
    Consider the directed path graph with three vertices, $V=\{a,b,c\}$ and two edges, $\vec{E}=\{(a,b),(b,c)\}$ (shown as a subgraph in Figure~\ref{fig:directed-motifs}(a)). Then, the $2$-path $abc$ is an allowed path and, by definition, $\partial_2(abc) = bc - ac + ab$. However, $ac$ is not an allowed $1$-path and there is no way to write $\partial_2(abc)$ as a linear combination of the allowed paths. Thus, $\partial_2(abc) \notin \mathcal{A}_1$.
\end{example}

In order to obtain a chain complex, one considers the subspace \(\Omega_p\) of \emph{\(\partial\)-invariant allowed paths}, 
\begin{equation}\label{eq:Omega}
    \Omega_p
    =
    \left\{
    \sigma \in \mathcal{A}_p
    :
    \partial_p(\sigma) \in \mathcal{A}_{p-1}
    \right\}. 
\end{equation}
In this paper, we will mostly be interested in the vector space $\Omega_p$ and so, unless otherwise stated, the boundary maps will always refer to the linear maps $\partial_p:\Omega_p\to \Omega_{p-1}$ obtained by restricting the boundary maps in \Eqref{eq:bdry_Lambda} (denoted by the same symbol in a minor abuse of notation). We equip each \(\Lambda_p\) with the standard inner product for which the elementary $p$-paths form an orthonormal basis, and we equip \(\Omega_p \subseteq \Lambda_p\) with the induced inner product.  
The adjoint of the boundary map \(\partial_p:\Omega_p\to \Omega_{p-1}\) is denoted by \(\partial_p^*:\Omega_{p-1}\to \Omega_p\). The \(p\)-path Laplacian is defined by $\mathbf{L}_p
=
\partial_{p+1}\partial_{p+1}^*
+
\partial_p^*\partial_p$. Equivalently, if \(\mathbf{B}_p\) denotes the matrix representation of \(\partial_p\), then, in an orthonormal basis, the $p$-path Laplacian has the matrix representation (also denoted $\mathbf{L}_p$ in a minor abuse of notation)
\begin{equation}\label{eq:PathLap}
    \mathbf{L}_p = \mathbf{B}_{p+1}\mathbf{B}_{p+1}^{\top}
    +
    \mathbf{B}_{p}^{\top}\mathbf{B}_{p}.
\end{equation}
The first term $\mathbf{B}_{p+1}\mathbf{B}_{p+1}^{\top}$ is called the \emph{up Laplacian}, while the second term $  \mathbf{B}_{p}^{\top}\mathbf{B}_{p}$ is called the \emph{down Laplacian}.
Furthermore, unlike magnetic-Laplacian constructions, the path Laplacian is obtained directly from the directed path structure of the graph and does not require the choice of an additional phase parameter. Moreover, because the operator acts on spaces of directed paths, it naturally produces spectral information at different orders. A fully worked example is provided in \cref{ex:example3} (Appendix~\ref{app:proofs}). 

\subsection{Path Laplacian PEs}\label{subsec:PathLapPEs}
In this work, for computational efficiency, we construct path complexes up to path length $p=2$. 
The corresponding path Laplacians provide spectral information at both the vertex and edge levels: the \(0\)-path Laplacian \(\mathbf{L}_0\) acts on vertices and is used to define node-level positional encodings, whereas the \(1\)-path Laplacian \(\mathbf{L}_1\) acts on directed edges and is used to define edge-level positional encodings. An explicit formula for \(\mathbf{L}_0\) is given by
\begin{equation*}
    \mathbf{L}_0(u,v) = \mathbf{D}(u,v)-\big( \mathbf{A}(u,v) + \mathbf{A}(v,u)\big),
\end{equation*}
where \(\mathbf{D}\) is the diagonal degree matrix associated with \(\mathbf{A}+\mathbf{A}^{\top}\), and $\mathbf{A}$ is the adjacency matrix defined by \(\mathbf{A}(u,v)=1\) if \((u,v)\in \vec E\), and zero otherwise. 
We note that if $G$ has no bidirectional edges, then $\mathbf{L}_0$ is equivalent to the traditional (combinatorial) graph Laplacian on the symmetrized graph associated with $G$ (constructed by placing an (undirected) edge between $u$ and $v$ if $(u,v)\in\vec{E}$ or $(v,u)\in\vec{E}$).
Moreover, the edge Laplacian is given by
\begin{equation}\label{eq:edge_lap}
    \mathbf{L}_1= \mathbf{B}_2\mathbf{B}_2^{\top} +
    \mathbf{B}_1^{\top}\mathbf{B}_1.
\end{equation}
The edge-level operator incorporates both incidence information through the down Laplacian and directed two-step path information through the up Laplacian. These encodings can then be incorporated into graph neural networks or graph Transformers as additional structural features, allowing the model to access directional higher-order information beyond the symmetrized adjacency structure.

\input{figures/motifs}

\paragraph{Beyond edge directionality.}
We highlight here a feature of the path Laplacian that makes it a natural backbone for extracting positional encodings. As discussed above, two-step relations are encoded in the edge Laplacian \(\mathbf{L}_1\) through the up-Laplacian term \(\mathbf{B}_2\mathbf{B}_2^{\top}\); see \Eqref{eq:edge_lap}. Moreover, \citet[Proposition 2.9]{grigor2014homotopy} shows that the space \(\Omega_2\)  of $\partial$-invariant allowed \(2\)-paths admits a basis spanned by the directed motifs in Figure~\ref{fig:directed-motifs}.  
Thus, when the boundary map \(\partial_2: \Omega_2 \rightarrow \Omega_1\) is represented in such a basis, the matrix \(\mathbf{B}_2\) encodes the incidence relations between edges and directed \(2\)-path motifs. Although we do not explicitly fix this basis in practice, this perspective suggests that the up Laplacian $\mathbf{B}_2\mathbf{B}_2^{\top}$ captures structural information beyond edge directionality alone.

\paragraph{A basis for $\Omega_2$.} 
The space of $p$-paths admits a decomposition $\Lambda_p = \mathcal{A}_p \oplus \mathcal{D}_p$ into the direct sum of 
$\mathcal{A}_p$ and the vector space $\mathcal{D}_p$ spanned by the non-allowed elementary $p$-paths. Then, for any $p\geq 1$, the restriction of the boundary map defined in \cref{eq:bdry_Lambda} to the space of allowed paths $\partial_p : \mathcal{A}_p \rightarrow \Lambda_{p-1}$ can be expressed as a sum of two maps $\partial_p = \partial_{p}^{\mathcal{A}} + \partial_{p}^{\mathcal{D}}$ defined by 
\(  \partial_{p}^{\mathcal{A}}
=
\pi_{p-1}^{\mathcal{A}}\circ\partial_p\), and \(
\partial_{p}^{\mathcal{D}}
=
\pi_{p-1}^{\mathcal{D}}\circ\partial_p,
\)
where $\pi_{p-1}^{\mathcal{A}}$ and $\pi_{p-1}^{\mathcal{D}}$ denote the projection maps onto $\mathcal{A}_{p-1}$ and $\mathcal{D}_{p-1}$, respectively. The vector space of $\partial$-invariant paths is precisely
\(
\Omega_p = \ker \partial_{p}^{\mathcal{D}}.
\)
The matrix representation $\mathbf{B}_p$ of the restricted boundary map $\partial_p : \Omega_p \rightarrow \Omega_{p-1}$ depends on choices of bases for $\Omega_p$, and $\Omega_{p-1}$. Constructing such bases is a key step in the construction of the path Laplacian in \Eqref{eq:PathLap}. 

In general, computing a kernel basis for large matrices can be
time-consuming. However, since we are mainly interested in the case
$p\leq 2$, this can be significantly simplified. The following theorem provides an explicit basis for the space of $\partial$-invariant paths $\Omega_2$. Algorithm~\ref{alg:omega2}, which implements this construction, is provided in Appendix~\ref{app:proofs}. 
We note that unlike the basis obtained by \citet[Proposition 2.9]{grigor2014homotopy}, our basis is orthonormal which allows for efficient numerical implementation.


\begin{theorem}\label{thm:basis}
Consider a directed graph $G$ with vector space of allowed $2$-paths
$\mathcal{A}_2$. Let $\mathcal{B}_{\mathcal{A}}^{(2)}$ denote the basis of $\mathcal{A}_2$
formed by all elementary allowed $2$-paths. Then $\Omega_2$ admits an orthonormal 
basis consisting of the $\partial$-invariant elementary $2$-paths
$\sigma \in \mathcal{B}_{\mathcal{A}}^{(2)}$, together with vectors of
the form
\begin{equation}
    \mathbf h_\ell
    =
    \frac{1}{\sqrt{\ell(\ell+1)}}
    \left(
        \sum_{q=1}^{\ell}\sigma_q
        -
        \ell \sigma_{\ell+1}
    \right),
\end{equation}
for $\ell=1,\ldots,m-1$, where, for each non-allowed elementary
$1$-path $\tau$, $\sigma_1,\ldots,\sigma_m$ is a fixed ordering of all
elementary allowed $2$-paths satisfying $\partial_2^{\mathcal{D}}(\sigma_q)=-\tau$.
\end{theorem}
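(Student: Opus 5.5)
The plan is to compute $\partial_2^{\mathcal{D}}$ explicitly on the elementary allowed $2$-paths and read off its kernel. Take an allowed $2$-path $\sigma = abc$ with $(a,b),(b,c)\in\vec E$. Then $\partial_2(abc) = bc - ac + ab$, and $bc$, $ab$ are allowed. Since $G$ has no self-loops, $b\neq a$ and $b \neq c$. If $a = c$, then $aa$ is identified with zero in $\Lambda_1$, so $\partial_2^{\mathcal{D}}(aba)=0$. If $a\neq c$ and $(a,c)\in \vec E$, again $\partial_2^{\mathcal{D}}(abc)=0$. Otherwise $\partial_2^{\mathcal{D}}(abc) = -ac$, where $\tau = ac$ is a non-allowed elementary $1$-path.

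This gives a partition of $\mathcal{B}_{\mathcal{A}}^{(2)}$ into two kinds of elements:
\begin{itemize}
\item the $\partial$-invariant elementary paths $\mathcal{I}$, on which $\partial_2^{\mathcal{D}}$ vanishes;
\item the classes $S_\tau = \{\sigma : \partial_2^{\mathcal{D}}(\sigma) = -\tau\}$, indexed by non-allowed $\tau$.
\end{itemize}
The classes $S_\tau$ are pairwise disjoint, since each $\sigma = abc$ determines $\tau = ac$ uniquely.

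Next, write a general $x = \sum_{\sigma} \lambda_\sigma \sigma \in \mathcal{A}_2$. Then
\[
\partial_2^{\mathcal{D}}(x) = -\sum_{\tau}\Big(\sum_{\sigma\in S_\tau}\lambda_\sigma\Big)\tau .
\]
The distinct non-allowed elementary $1$-paths $\tau$ are linearly independent in $\mathcal{D}_1$. Hence, using $\Omega_2=\ker\partial_2^{\mathcal{D}}$ as stated in the paper, $x\in\Omega_2$ if and only if $\sum_{\sigma\in S_\tau}\lambda_\sigma=0$ for every $\tau$, while the coefficients on $\mathcal{I}$ are unconstrained. Because the sets $\mathcal{I}$ and $S_\tau$ are disjoint subsets of an orthonormal basis, this yields an orthogonal direct sum
\[
\Omega_2 = \operatorname{span}(\mathcal{I}) \;\oplus\; \bigoplus_\tau Z_\tau,
\qquad
Z_\tau = \Big\{\textstyle\sum_{q=1}^{m}\lambda_q\sigma_q : \sum_{q=1}^m \lambda_q = 0\Big\},
\]
where $\sigma_1,\dots,\sigma_m$ is the fixed ordering of $S_\tau$. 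Each $Z_\tau$ has dimension $m-1$; when $m=1$ it is zero and contributes no vectors.

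It then remains to show that the Helmert-type vectors $\mathbf h_1,\dots,\mathbf h_{m-1}$ form an orthonormal basis of $Z_\tau$. Each $\mathbf h_\ell$ has coefficient sum $\ell-\ell=0$, so it lies in $Z_\tau$. Its norm satisfies $\|\mathbf h_\ell\|^2 = (\ell+\ell^2)/(\ell(\ell+1))=1$. For $\ell<k$, $\mathbf h_\ell$ is supported on $\sigma_1,\dots,\sigma_{\ell+1}$, and on these indices $\mathbf h_k$ has the constant coefficient $1/\sqrt{k(k+1)}$. Therefore $\langle \mathbf h_\ell,\mathbf h_k\rangle$ is proportional to the coefficient sum of $\mathbf h_\ell$, which is $0$. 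We now have $m-1$ orthonormal vectors in an $(m-1)$-dimensional space, so they form a basis of $Z_\tau$.

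Finally, the elements of $\mathcal{I}$ are orthonormal. They are also orthogonal to every $\mathbf h_\ell$, and vectors from different $Z_\tau$ are orthogonal, because all of these have disjoint supports in the elementary basis. Combining the pieces gives the claimed orthonormal basis of $\Omega_2$.

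The main obstacle is the first step, namely getting the case analysis of $\partial_2^{\mathcal{D}}$ exactly right. One must use the no-self-loop assumption and the identification $v_iv_i=0$ to dispose of $aba$ and degenerate terms, and note that both allowed-edge cases fall into $\mathcal{I}$. Once it is established that each allowed $2$-path maps either to $0$ or to $-\tau$ for a single $\tau$, the rest is linear algebra.
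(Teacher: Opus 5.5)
Your proposal is correct and follows essentially the same route as the paper: you classify each elementary allowed $2$-path by whether $\partial_2^{\mathcal{D}}$ vanishes on it or sends it to $-\tau$ for a unique non-allowed $\tau$, reduce the kernel to the constraint $\mathbf{1}^\top\lambda=0$ on each group, and take the Helmert-type vectors $\mathbf h_\ell$ as an orthonormal basis of that constraint space. Your orthogonal direct-sum decomposition of $\Omega_2$, together with your checks of orthonormality and dimension, fills in details that the paper only calls ``straightforward.''
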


\cref{thm:basis} allows an efficient construction of the boundary matrix $\mathbf{B}_2$, hence of the path Laplacian. 

 
\begin{figure}[htbp]
    \centering
    \resizebox{1.0\textwidth}{!}{
        \input{figures/pipeline/pathlappe}
    }
    \caption{Overview of PathLapPE construction and feature extraction pipeline.}
    \label{fig:placeholder}
\end{figure}

\subsection{EdgeSPE: Sign- and basis-invariant edge-level Path Laplacian PEs}\label{sec:EdgeSPE}
In this section, we briefly review  
Stable and Expressive Positional Encodings (SPE),  an effective encoding strategy introduced in \citet{huang2024stability} to make spectral PEs sign- and basis-invariant.  
Standard SPE was developed for node-level encodings and therefore cannot be directly applied to the edge-level PathLapPE encodings corresponding to $\mathbf{L}_1$ without suitable modifications.
We therefore develop a modified method, EdgeSPE,  which extends SPE to edge-level encodings. 

Letting $\mathbf{V}\in\mathbb{R}^{n\times d}$ be a matrix containing the first $d$ eigenvectors of a Laplacian matrix $\mathbf{L}$ and letting  $\lambda\in\mathbb{R}^d$ 
denote a vector of the corresponding eigenvalues, the original 
SPE is defined by                          \begin{equation}\label{eq:NodeSPE}
    \mathrm{SPE}(\mathbf{V},\lambda)=\rho\big(\mathbf{V}\mathrm{diag}(\phi_1(\lambda))\mathbf{V}^\top,\ \ldots,\ \mathbf{V}\mathrm{diag}(\phi_m(\lambda))\mathbf{V}^\top\big),  
\end{equation}     
where $\rho:\mathbb{R}^{n\times n\times m}\to\mathbb{R}^{n\times p}$ and each of the $\phi_\ell:\mathbb{R}^d\to\mathbb{R}^d$ are permutation equivariant functions.
Intuitively, the  $\phi_\ell$ act as a soft, learnable partition of the spectrum, in contrast to the hard partition into eigenspaces used by BasisNet \citep{lim2023sign}. 
See Appendix \ref{app:SPE} for further discussion.


\paragraph{EdgeSPE for PathLapPE.} 
To extend SPE to the edge-level setting, we let $\mathbf{U}\in\mathbb{R}^{|E|\times d}$ be a matrix containing the first $d$ eigenvectors of $\mathbf{L}_1$ and let  $\lambda^{(1)}\in\mathbb{R}^d$ denote a vector of the corresponding eigenvalues. 
Analogously to the kernels $\mathbf{V}\mathrm{diag}(\phi_i(\lambda))\mathbf{V}^\top$, used in the original SPE (see \cref{eq:NodeSPE}), we form edge-level SPE kernels $\mathbf{K}_i = \mathbf{U}\mathrm{diag}\big(\phi_i(\lambda^{(1)})\big)\mathbf{U}^\top \in \mathbb{R}^{|E|\times|E|}$, where $1 \leq i \leq m$.                For each edge $e$, we let $\mathcal{S}(e)=\{f:\mathrm{tail}(f)=\mathrm{head}(e)\}$ and
$\mathcal{P}(e)=\{f:\mathrm{head}(f)=\mathrm{tail}(e)\}$ denote its successors and its predecessors. Then for all $i$, we compute three encodings given by $\mathbf{h}^{\mathrm{diag}}_i(e) = \mathbf{K}_i(e,e)$, $
\mathbf h^{\mathrm{succ}}_i(e) = \sum_{f\in\mathcal{S}(e)} \mathbf{K}_i(e,f)$, and $\mathbf h^{\mathrm{pred}}_i(e) = \sum_{f\in\mathcal{P}(e)} \mathbf{K}_i(f,e)$.
Finally, we define the output of EdgeSPE by 
$\textbf{EdgeSPE}(e)=\textbf{MLP}\big(\mathbf h^{\mathrm{diag}}(e),\mathbf h^{\mathrm{succ}}(e),\mathbf h^{\mathrm{pred}}(e)\big)$.

             


\section{Experiments}\label{sec:experiments}

We evaluate PathLapPE as a positional encoding on a wide array of directed graph settings, including synthetic and real-world benchmarks, regression and classification targets, small and large-scale datasets, node- and graph-level tasks. We additionally analyze the computational cost of PathLapPE. All datasets are directed, and all relevant statistics are provided in Tables~\ref{tab:dataset_stats_node} and~\ref{tab:dataset_stats_graph} (Appendix~\ref{app:dataset_stats}). Additional details on training and model hyperparameters for all tasks are provided in Appendix~\ref{app:hparams}.

\begin{wrapfigure}[14]{r}{0.48\textwidth}
    \vspace{-15pt}
    \centering
    \includegraphics[width=.48\textwidth]{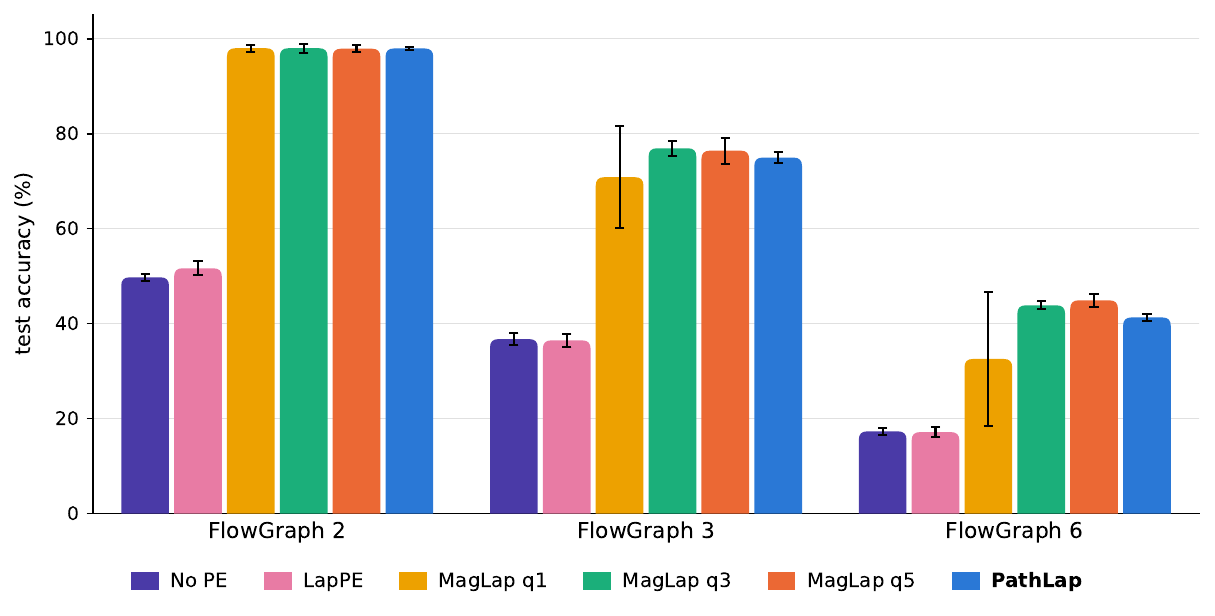}
    \caption{\textsc{FlowGraph} experiments, mean + std for five runs. Only directional PEs (PathLapPE and Multi-$q$-MagLapPE) perform above random.}
    \label{fig:flowgraph}
\end{wrapfigure}%

We compare PathLapPE with (i) no PEs, (ii) LapPE and (iii) RWPE~(both \citet{dwivedi2022graph}), and (iv) Multi-$q$-MagLapPE PEs~\citep{huang2025good}, using a variety of both conventional and direction-aware GNNs. 

In PathLapPE computation, we remove all self-loops, and pad zeros when $k > |V|$ or $|E|$.
In addition to uni- and bidirectional GAT and GINE models, we evaluate with BiGCNE, a model we introduce in Appendix~\ref{app:bigcne} as an extension of bidirectional GCN that supports edge features that act through a gating mechanism. Moreover, we process all node-level spectral positional encodings using the generalized SPE model introduced in \citet{huang2025good}, which yields invariant and stable representations. Similarly, we process edge-level PathLapPE encodings with our EdgeSPE, and report results both with and without EdgeSPE.  

\subsection{Synthetic flow graphs}
We begin with a simple task based on synthetic flow graphs designed to probe the directional information in PEs, primarily serving as a sanity check. 
\textsc{FlowGraph} datasets~\citep{wang2024directed} are generated by creating clusters of nodes that are then sequentially ordered with a probabilistic directional flow defined between consecutive clusters: $n\%$ of the edges flow in one direction (e.g., from cluster $c$ to $c+1$), whereas $(100-n)\%$ flow in the opposite direction. Classes are defined by $n\%$ so that the task is directly tied to learning edge directionality. After computing positional encodings, we convert the graph to undirected (i.e., $A(u,v)\leftarrow \max\{A(u,v),A(v,u)\}$) and use a standard 
(i.e., unidirectional) GINE to force the network to rely only on the PE to solve the task. 

\paragraph{Results.} We evaluate on \textsc{FlowGraph} $K \in \{2, 3, 6\}$, where $K$ denotes the number of classes defined in each task,
and present the results in \Cref{fig:flowgraph}. As expected, No PE and LapPE performance are essentially random, as the GINE layers have no directional information to work with. Multi-$q$-Laplacian PEs and PathLapPE can nearly fully solve the task for $K=2$, and perform well above random for $K \in \{3, 6\}$, indicating that they capture directionality. We observe that a single $q$-parameter is more unstable than PathLapPE, which the Multi-$q$ variants alleviate.

\subsection{\textsc{WebKB} datasets}

We next consider the WebKB datasets \textsc{Cornell}, \textsc{Wisconsin}, and \textsc{Texas}~\citep{Pei2020Geom-GCN:}, in which nodes represent web pages and directed edges represent hyperlinks. Node features consist of bag-of-words representations. The task is to classify the nodes as student, project, course, staff, or faculty. 
We compare PathLapPE with and without EdgeSPE against LapPE, RWPE and Multi-$q$-MagLapPE. For the latter, we report the best results over \(q \in \{0.01, \ 0.05, \ 0.10\}\).
Additional information on tuning and selected hyperparameters is provided in Appendix~\ref{app:webkb}, \Cref{tab:hparams}. 


\paragraph{Results.}
In Table~\ref{tab:WebKB}, we report mean test accuracy over the ten splits from \citet{Pei2020Geom-GCN:}. PathLapPE improves over the no PE baseline across all architectures and datasets and achieves the best average rank for each model.
The gains are largest for BiGINE and BiGCNE, reaching \(22.2\) and \(18.4\) percentage points, respectively, compared with at most \(7.6\) points for GAT. The larger gains for BiGINE and BiGCNE may reflect their ability to incorporate edge features directly into message passing. EdgeSPE further improves performance in most settings, indicating that the direction-aware edge encodings provide complementary information beyond the node-level PathLapPE.

\input{tables/webkb}

\subsection{Large-scale evaluation on \textsc{Roman-Empire}}

The WebKB datasets are relatively small and known to be noisy; thus, we next evaluate on a \emph{directed} variant of \textsc{Roman-Empire}~\citep{platonov2023a}, a much larger dataset with 22,662 nodes, derived from the Roman Empire Wikipedia page. 
Each node corresponds to a word; an edge exists between two nodes if one follows the other in the text, or the words are syntactically connected in a sentence. Node features are based on word embeddings. The task is to predict the syntactic role of each word. 


In addition to previous models, we evaluate bidirectional GAT and \emph{BiGINE-shared}, a bidirectional GINE that shares the MLP weights of the GINE convolutions across the two directions.
Weight sharing forces the model to rely more on the edge features, i.e., the spectral PEs, rather than on model expressivity 
to extract directional information, allowing us to evaluate the impact of individual PEs more distinctively.
We evaluate two Multi-$q$-MagLapPE configurations: the best single \(q\)-value of \(0.05\), and three stacked $q$ values.
For model architectures and hyperparameter selection, we largely follow \citet{platonov2023a}; full experimental details are provided in Appendix~\ref{app:roman_empire}, \Cref{tab:hparams}. 

\paragraph{Results.}

\input{tables/directed_roman_empire}

\Cref{tab:directed_roman_empire} reports the performance of PathLapPE compared to No PE, LaplacianPE, and Multi-$q$-MagLapPE baselines on \textsc{Roman-Empire} across five GNN architectures. PathLapPE achieves the best (or approximately equal-best, for BiGAT) result for every architecture. The gap between PathLapPE and the baselines is more pronounced for bidirectional GINE and GCNE, and is particularly large for our BiGINE-shared model, which is structurally restricted to rely on edge features for directionality. We also note that BiGINE-shared benefits more from PEs in general than the other architectures. 
Interestingly, both conventional GAT and its bidirectional variant benefit the least from (i) PEs in general, and (ii) PathLapPE specifically, with most PEs falling within one standard deviation of the PathLapPE results. 


\subsection{Open Circuit Benchmark \& High-Level Synthesis}

Finally, we consider inductive, graph-level regression tasks on the \textsc{Open Circuit Benchmark (OCB)} and \textsc{High-Level Synthesis (HLS)} datasets. For each task, we follow the implementations and optimized hyperparameters for BiGINE and GINE from~\citet{huang2025good}.

\textbf{OCB}~\citep{dong2023cktgnn} consists of 8,026 operational amplifier-based (op-amp) analog circuits as directed graphs, and the task is to predict DC gain (Gain), bandwidth (BW) and phase margin (PM) for each circuit. We evaluate two Multi-$q$-MagLapPE configurations, each with ten $q$ values but with phase parameters of 0.01 and 0.05. 

\textbf{HLS}~\citep{wu2022high} represents 18,570 C/C++ programs as directed intermediate representation (IR) graphs for performance analysis on FPGA integrated circuits. The task is to predict digital signal processor (DSP) and look-up table (LUT) usage, and critical path timing slack (CP, permitted delay that does not affect overall program runtime) for each program. We evaluate two Multi-$q$-MagLapPE configurations, each with five $q$ values but with phase parameters of 0.01 and 0.1. 

\input{tables/amp_hsl}

\paragraph{Results.}

\Cref{tab:amp_hsl} presents test results for the six tasks across both benchmarks, and the average rank of each PE. PathLapPE with EdgeSPE is the best method on average by some margin with average ranks of 1.50 and 1.83 for BiGINE and GINE respectively, as well as the most consistent: of the twelve model $\times$ task combinations evaluated, it is the best method on six, and top-2 on eleven of them -- the sole remaining task being GINE $\times$ CP, where PEs are not helpful in general. 
However, PathLapPE \emph{without} EdgeSPE (where we simply use a linear mapping of the raw edge-PEs) is relatively inconsistent, and falls behind not only the EdgeSPE variant but also other PEs on several tasks. The benefits of EdgeSPE are thus more apparent in such inductive settings where learning approximate sign- and basis-invariance is valuable, whereas in transductive node-level tasks one can precompute PEs on a single basis, as all training and inference happens over a single graph. Finally, we noticed that Multi-$q$-MagLapPE suffered from instabilities with GINE, where some runs converged to reasonable losses, while others fully diverged over the course of training. We provide a more extensive discussion in Appendix~\ref{app:ocb_hls}.



\begin{figure}[t]
  \centering
  \includegraphics[width=0.49\textwidth]{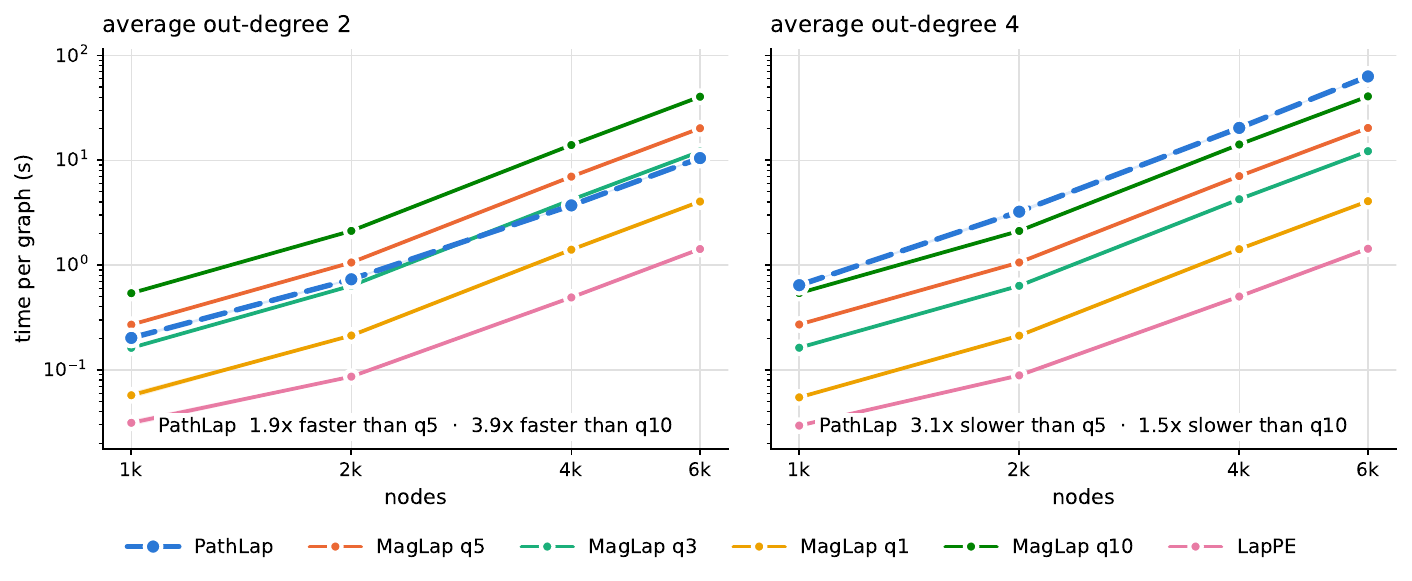}\hfill
  \includegraphics[width=0.49\textwidth]{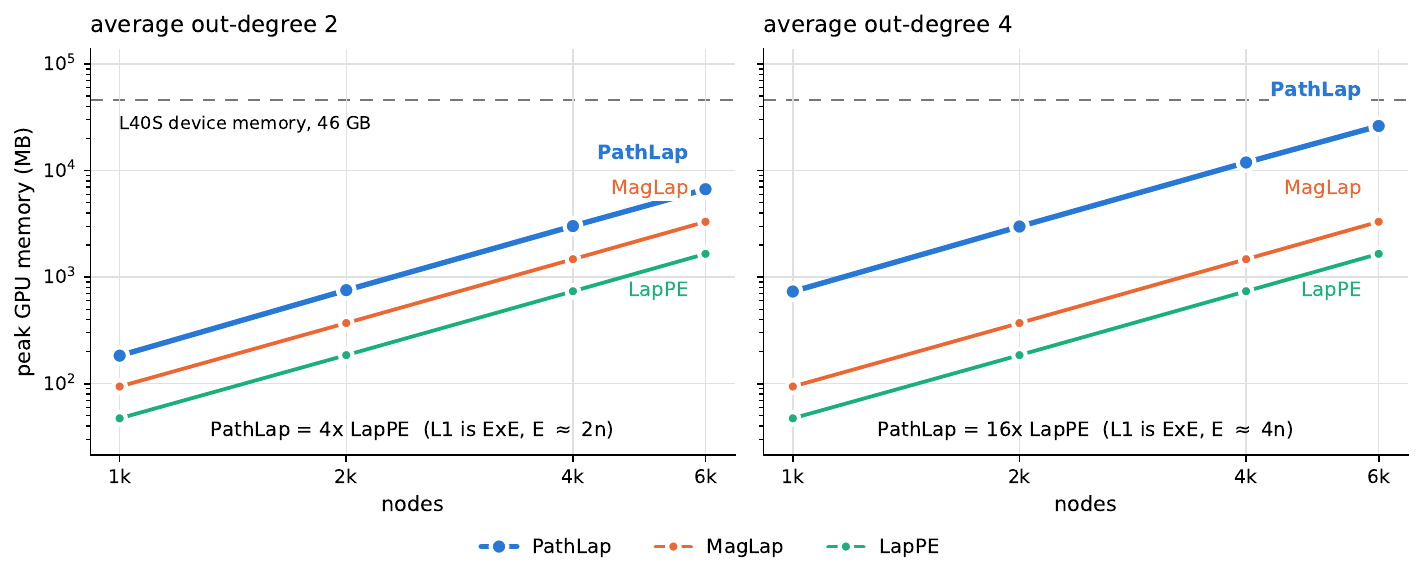}
  \caption{Time (left pair) and memory (right pair) cost of PE computation on directed ER graphs on an NVIDIA L40S GPU. Dense solver on GPU, $k=8$, three repeats for each experiment.}
  \label{fig:ER_cost}
\end{figure}

\subsection{Computational cost of PathLapPE}

PathLapPE has different scaling properties than other spectral encodings: the $\mathbf{L}_0$ and $\mathbf{L}_1$ path Laplacians scale with $|V|$ (like LapPE) and $|E|$, respectively, so PathLapPE computation is more efficient on sparser graphs. In \Cref{fig:ER_cost}, we grow Erd\H{o}s--R\'enyi graphs at two fixed densities (average out-degree two and four) and measure PE computation time and GPU memory. On the sparser graphs, PathLapPE is faster than Multi-$q$-MagLapPE for $\#q\geq3$ at scale. 
On the denser graphs, its cost is just above that of $\#q=10$.
PathLapPE uses more memory than Multi-$q$-MagLapPE, which processes $q$ values sequentially at fixed memory cost. For large graphs, sparse eigensolvers~\citep{scipy} can reduce this memory at a modest cost in time. Appendix~\ref{app:pe_cost}, \Cref{tab:pe-cost-horizontal} reports time and memory on our benchmarks; on the largest, \textsc{Roman-Empire} (22,662 nodes), sparse PathLapPE takes three seconds and 70 MB, making PE computation costs negligible compared to training. Overall, the additional computational cost of PathLapPE relative to Multi-$q$-LapPE is modest, especially considering the additional benefit that PathLapPE requires no tuning.




\section{Conclusion}
We introduced PathLapPE, a spectral positional encoding for directed graphs derived from the Laplacian of the path complex. PathLapPE provides real-valued node- and edge-level encodings that capture edge orientation and directed 2-path structure, without the computational burden of higher-order message-passing. We additionally provide EdgeSPE, which further improves the performance of edge-level encodings.    
Across a wide range of node- and graph-level
directed benchmarks, PathLapPE performs as well as or better than other
spectral PEs, with the largest gains for direction-aware architectures. Unlike
magnetic Laplacian encodings, it has no phase parameter to tune and requires only
one real eigendecomposition per Laplacian, making it computationally efficient.
Extending our PE to path Laplacians of higher order and reducing memory cost on denser graphs, as well as the inclusion into graph Transformers, are natural directions for future work.



\subsection*{Acknowledgements and Disclosure of Funding}

This work was partially funded by Bourse en intelligence artificielle des Études supérieures et postdoctorales (ESP) 2023--2024, Bourse d’excellence du DIRO (2025) [S.C.], and Bourse de fin d’études doctorales (2026--2027) [S.C. and L.M.]; NSF OIA 2242769 [M.P.]; Canada CIFAR AI Chair, IVADO (Institut de valorisation des données) grant PRF-2019-3583139727, FRQNT (Fonds de recherche du Québec -- Nature et technologies) grant 299376, and NSERC Discovery Grant 03267 [G.W.]; and NSF DMS grant 2327211 [M.P. and G.W.]. This work has received funding from the Swiss State Secretariat for Education, Research and Innovation~(SERI) [B.R.]. This research was also enabled in part by computing resources provided by Mila (mila.quebec). The content provided here is solely the responsibility of the authors and does not necessarily represent the official views of the funding agencies.


\subsection*{AI use statement}

Generative AI usage in this work has been strictly limited to light text editing, adjustments of provided figures (e.g., TikZ adjustments for Fig. 2), and writing minor boilerplate code. Generative AI tools have \emph{not} been used for the development of the mathematical framework nor the accompanying formulation or proving of mathematical claims, experiment design, dataset generation, data analysis nor core development of the codebase. We have, of course, reviewed all AI-assisted work. LLM adjustments to any text and figures have been checked by multiple authors; any LLM-generated code adjustments have also been verified and tested for correctness. We take responsibility for the final content of this work, including text, claims or artifacts produced with the aid of generative AI.



\bibliography{iclr2027_conference}
\bibliographystyle{iclr2027_conference}

\clearpage


\appendix

\section{Limitations}
\label{sec:limitations}

In this work, we have considered path Laplacians only up to order $p=2$; higher-order constructions are possible, albeit at increasing computational costs. In conjunction with this point, PathLapPE construction and eigendecomposition is inherently more memory-intensive on dense graphs due to its scaling with respect to the number of edges. While efficient workarounds exist particularly for eigendecomposition, extending to even higher-order paths would exacerbate these memory limitations. Our evaluation also largely focuses on uni- and bidirectional GNNs; while we expect the observed performance gains to naturally extend to graph Transformers, further evaluations on Transformer-based architectures would solidify this claim. Finally, in our research we have observed that there is limited work on whether directionality is truly beneficial to a given task on directed benchmarks; future work on investigating types of directed graphs that would benefit from PathLapPE would increase the potential impact of our method.


\section{The magnetic Laplacian}

For undirected graphs, the standard graph Laplacian provides a canonical self-adjoint operator whose eigenvectors encode global graph structure. In the directed setting, however, there is no unique analogue of the graph Laplacian. Different constructions preserve different aspects of directionality. In this section, we briefly describe magnetic Laplacians, which encode directionality through complex phases.

Let \(G=(V,\vec E)\) be a directed graph with adjacency matrix \(\mathbf{A}\), where \(\mathbf{A}_{uv}=1\) if \((u,v)\in \vec E\) and \(\mathbf{A}_{uv}=0\) otherwise. Magnetic Laplacian constructions first define a symmetrized adjacency matrix \(\mathbf{A}_s = \frac{1}{2}(\mathbf{A}+\mathbf{A}^{\top})\), which records the undirected support of the graph, possibly with weights reflecting whether one or both orientations are present. Directionality is then incorporated through a skew-symmetric phase matrix \(\Theta^{(q)} = 2\pi q(\mathbf{A}-\mathbf{A}^{\top})\), where \(q\) is a tunable phase parameter controlling the strength of the directional signal. The corresponding complex Hermitian adjacency matrix is given by
\[
\mathbf{H}^{(q)} = \mathbf{A}_s \odot \exp\!\left(i\Theta^{(q)}\right),
\]
where \(\odot\) denotes entrywise multiplication. The degree matrix associated with \(\mathbf{A}_s\) is given by \(\mathbf{D}_s(u,u) = \sum_{v\in V} \mathbf{A}_s(u,v)\). The unnormalized magnetic Laplacian is then defined by
\[
\mathbf{L}^{(q)}_{\mathrm{mag}} = \mathbf{D}_s - \mathbf{H}^{(q)}.
\]
The normalized version
\[
\mathcal{L}^{(q)}_{\mathrm{mag}} = \mathbf{I} - \mathbf{D}_s^{-1/2} \mathbf{H}^{(q)} \mathbf{D}_s^{-1/2}
\]
is often used in graph learning.

\section{Proof of Theorem 1} \label{app:proofs}
In this section, we provide a proof of Theorem \ref{thm:basis}. We recall that any directed graph $G = (V, \vec E)$ gives rise to a vector space $\Lambda_p$ spanned by elementary $p$-paths, a vector space $\mathcal{A}_p \subseteq \Lambda_p$ spanned by allowed elementary $p$-paths and finally the main object of interest in this section, the vector space $\Omega_p \subseteq \mathcal{A}_p$ of $\partial_p$-invariant allowed paths. In this section, $p=1,2$, and we are interested in constructing a basis for $\Omega_2$ in closed form.  Before proving Theorem \ref{thm:basis}, we first consider the following example, in which we compute
a basis for $\Omega_2$ in a simple case.

\begin{example} \label{ex:example3}
As in Example \ref{ex: directed triangle under stand allowed paths}, consider the directed triangle represented in Figure~\ref{fig:directed-motifs}(a) and the elementary paths $u=ab, v=bc$, $w=ac$, and  
 $t=abc$. Then, an ordered basis for $\Omega_0$ is given by the vertices $a, b$, and $c$ and an ordered basis for $\Omega_1$ is given by the $1$-paths corresponding to the directed edges $u$, $v$, and $w$. Since $\partial t = bc - ac + ab=v-w+u$, we conclude that $t$ is a $\partial$-invariant allowed path, hence an element of $\Omega_2$. One may also verify that there are no other independent elements of $\Omega_2$ and so $\Omega_2$ is a one-dimensional vector space with basis $\{t\}$. Given these choices of bases (in the order listed above), the boundary matrices are given by 
\begin{equation*}
    \mathbf B_1 = 
    \begin{bmatrix}
        -1 & 0 & -1 \\
        1 & -1 & 0 \\
        0 & 1 &  1 
    \end{bmatrix} ; \quad \mathbf B_2 = \begin{bmatrix}
        1& 1 & -1
    \end{bmatrix}^\top.  
\end{equation*}

The unweighted edge Laplacian is therefore $ \mathbf L_1 = \mathbf B_2 \mathbf  B_{2}^\top + \mathbf B_{1}^\top \mathbf B_1 = 3\mathbf I_{3}$.

\end{example}

We recall from \Cref{subsec:PathLapPEs} that $\Lambda_1$ can be expressed as the direct sum $\mathcal{A}_1 \oplus \mathcal{D}_1$ and the corresponding projection maps $\pi^{(1)}_{\mathcal{A}} : \Lambda_1 \rightarrow \mathcal{A}_1$, and $\pi_{\mathcal{D}}^{(1)} : \Lambda_1 \rightarrow \mathcal{D}_1$ induce the maps 
\begin{equation*}
    \partial_{2}^{\mathcal{A}}: \mathcal{A}_2 \rightarrow \mathcal{A}_1 ; \quad \partial_{2}^{\mathcal{A}}(\sigma)\mapsto \pi_{\mathcal{A}}^{(1)} \circ \partial_2 (\sigma),  
\end{equation*}
and 
\begin{equation}\label{eq:bdry_D}
    \partial_{2}^{\mathcal{D}}: \mathcal{A}_2 \rightarrow \mathcal{D}_1 ; \quad \partial_{2}^{\mathcal{D}}(\sigma)\mapsto \pi_{\mathcal{D}}^{(1)} \circ \partial_2 (\sigma).    
\end{equation}
Here $\partial_2 : \Lambda_2 \rightarrow \Lambda_1$ is the usual boundary map on $p$-paths. Note that, by definition,  $\Omega_2 $ is equal to $\ker \partial_{2}^{\mathcal{D}}$ since $\sigma$ is in $\Omega_2$ if and only if $\partial_2(\sigma)\in\mathcal{A}_1$ and $\mathcal{D}_1$ is the orthogonal complement of $\mathcal{A}_1$. 

We consider the basis of $\Lambda_p$ consisting of elementary $p$-paths, and fix an ordering for the basis, denoted by:  
\begin{equation*}
    \mathcal{B}_{\Lambda}^{(p)}:= \lbrace \sigma_{i}^{(p)}|\;  i=1, \ldots, \dim(\Lambda_p)\rbrace 
\end{equation*}

The basis of $\mathcal{A}_p$ is the subset $\mathcal{B}_{\mathcal{A}}^{(p)} \subseteq
\mathcal{B}_{\Lambda}^{(p)}$ corresponding to the allowed paths. We similarly denote by $\mathcal{B}_{\mathcal{D}}^{(1)} \subseteq
\mathcal{B}_{\Lambda}^{(1)}$ the basis of $\mathcal{D}_1$ formed by the non-allowed elementary $1$-paths. Finally, we denote by
$\mathbf{B}_{2}^{\mathcal{D}}$ the matrix representation of the map $\partial_2^{\mathcal{D}}$ defined in \Cref{eq:bdry_D}, expressed with respect to the bases $\mathcal{B}_{\mathcal{A}}^{(2)}$ and
$\mathcal{B}_{\mathcal{D}}^{(1)}$.


\begin{proof}
Consider an arbitrary elementary allowed $2$-path
$\sigma_{j}^{(2)} = v_0v_1v_2$. By definition,
\begin{equation}
    \partial_2(\sigma_{j}^{(2)})
    =
    v_1v_2 - v_0v_2 + v_0v_1.
\end{equation}
Since $\sigma_{j}^{(2)}$ is an allowed path, the edges $v_0v_1$
and $v_1v_2$ are allowed $1$-paths (i.e., directed edges in $G$). If $v_0=v_2$, the elementary $2$-path $\sigma_{j}^{(2)}$ is $\partial$-invariant. Indeed, by definition of $\Lambda_1$ paths of the form $v_iv_i$ are identified with zero. Assume $v_0\neq v_2$ then, 
\begin{equation*}
    \partial_2^{\mathcal{D}}(\sigma_{j}^{(2)})
    =
    -\pi_{\mathcal{D}}^{(1)}(v_0v_2).
\end{equation*}
Hence, $\sigma_{j}^{(2)} \in \Omega_2$ if and only if
$v_0v_2 \in \mathcal{A}_1$. Thus, if $v_0v_2$ is an allowed $1$-path, then $\mathbf{B}_{2}^{\mathcal{D}}[:,j]=0$ and so $\sigma_j^{(2)}$ is $\partial$-invariant and an element of $\Omega_2$.
Otherwise, let
$\sigma_i^{(1)}=v_0v_2 \in \mathcal{B}_{\mathcal{D}}^{(1)}$.
Then, $\mathbf{B}_{2}^{\mathcal{D}}[i,j] = -1$. Assume there exist exactly $m$ elementary allowed $2$-paths
$\sigma_{j_1}^{(2)},\ldots,\sigma_{j_m}^{(2)}$ satisfying
\begin{equation*}
    \partial_2^{\mathcal{D}}
    \bigl(\sigma_{j_q}^{(2)}\bigr)
    =
    -\sigma_i^{(1)},
    \qquad q=1,\ldots,m.
\end{equation*}
Then, the restriction of $\mathbf{B}_{2}^{\mathcal{D}}[i,:]$ to these basis vectors corresponds to the constant row vector of length $m$ given by 
\begin{equation*}
    -\mathbf{1}^{\top}
    =
    (-1,\ldots,-1).
\end{equation*}
Since $\Omega_2 \simeq \ker \mathbf{B}_{2}^{\mathcal{D}}$, it remains
to construct a basis for the kernel of $-\mathbf{1}^{\top}$.
If $m=1$, this kernel is trivial. For $m>1$, consider the column
vectors in $\mathbb{R}^m$ given by
\begin{equation}
    \frac{1}{\sqrt{\ell(\ell+1)}}
    \left(
        \underbrace{1,\ldots,1}_{\ell},
        -\ell,
        0,\ldots,0
    \right)^\top,
    \qquad
    1\leq \ell<m.
\end{equation}
It is straightforward to verify that these vectors constitute an
orthonormal basis for $\ker(-\mathbf{1}^{\top})$. Writing them with
respect to the basis elements
$\sigma_{j_1}^{(2)},\ldots,\sigma_{j_m}^{(2)}$ gives the $2$-paths
\begin{equation*}
    \mathbf h_\ell
    =
    \frac{1}{\sqrt{\ell(\ell+1)}}
    \left(
        \sum_{q=1}^{\ell}\sigma_{j_q}^{(2)}
        -
        \ell\sigma_{j_{\ell+1}}^{(2)}
    \right),
    \qquad
    \ell=1,\ldots,m-1.
\end{equation*}

Finally, every column of $\mathbf{B}_{2}^{\mathcal{D}}$ is either
zero or contains exactly one nonzero entry. The nonzero columns can
therefore be partitioned according to the non-allowed elementary
$1$-path appearing in their boundary. Combining the
$\partial$-invariant elementary $2$-paths corresponding to the zero
columns with the vectors $\mathbf h_\ell$ constructed for each such
group gives a basis for $\ker \mathbf{B}_{2}^{\mathcal{D}}$, and
therefore for $\Omega_2$.
\end{proof}

\begin{remark}
The basis in \cref{thm:basis} is consistent with the structural description of
$\Omega_2$ given by \citet[Proposition~2.9]{grigor2014homotopy}
(see also \citet[Theorem~1.8]{grigor2022advances}), where $\Omega_2$ is
shown to be spanned by triangles, squares, and double arrows (i.e., $2$-paths in \cref{fig:directed-motifs}(a), (b), and (c), respectively). In our
construction, double arrows and triangles are precisely
$\partial$-invariant elementary $2$-paths. For a fixed non-allowed
elementary $1$-path $\tau$, the remaining basis vectors satisfy
\[
\mathbf h_\ell
=
\frac{1}{\sqrt{\ell(\ell+1)}}
\sum_{q=1}^{\ell}
(\sigma_q-\sigma_{\ell+1}),
\]
and therefore lie in the span of the corresponding squares (i.e., $2$-paths illustrated in \cref{fig:directed-motifs}~(b)). Thus,
\cref{thm:basis} can be viewed as an orthonormal refinement of this
spanning description.

The orthonormality is also useful for constructing the path Laplacian.
Indeed, the matrix formula
\[
\mathbf L_p
=
\mathbf B_{p+1}\mathbf B_{p+1}^{\top}
+
\mathbf B_p^{\top}\mathbf B_p
\]
assumes that the boundary matrices are represented with respect to
orthonormal bases. 

Let $\mathbf M_2$ denote the Gram
matrix of a chosen basis of $\Omega_2$ with respect to the induced
inner product. Since the natural bases of $\Omega_0$ and $\Omega_1$ (the vertex set and directed edge set, respectively)
are orthonormal, the matrix representation of the adjoint
$\partial_2^*$ is
\[
[\partial_2^*]
=
\mathbf M_2^{-1}\mathbf B_2^\top.
\]
Hence, for an arbitrary basis of $\Omega_2$,
\[
\mathbf L_1
=
\mathbf B_2\mathbf M_2^{-1}\mathbf B_2^\top
+
\mathbf B_1^\top\mathbf B_1.
\]
For the orthonormal basis of $\Omega_2$ given in
\cref{thm:basis}, $\mathbf M_2=\mathbf I$, recovering the simpler
expression
\[
\mathbf L_1
=
\mathbf B_2\mathbf B_2^\top
+
\mathbf B_1^\top\mathbf B_1.
\]

Our closed-form basis
therefore avoids an additional numerical orthogonalization or
Gram-matrix correction when constructing $\mathbf L_1$.

\end{remark}

\begin{algorithm}[h]
\caption{Construction of a basis for $\Omega_2$}
\label{alg:omega2}
\begin{algorithmic}[1]
\State \textbf{Input:} Directed graph $G=(V,\vec E)$
\State Construct the set $\mathcal{A}_2$ of allowed $2$-paths and $\mathbf{B}_{2}^{\mathcal{D}}$, the matrix representation of $\partial_2^{\mathcal{D}}$.
\State Initialize $\mathcal{B}_{\Omega_2}\gets\varnothing$ and
$\mathcal{R}\gets\varnothing$.

\For{$i=1,\ldots,|\mathcal{A}_2|$}
    \If{$\mathbf{B}_{2}^{\mathcal{D}}[:,i]=0$}
        \State Add $e_i$ to $\mathcal{B}_{\Omega_2}$.
    \Else
        \State Let $r$ be the unique row such that
        $(\mathbf{B}_{2}^{\mathcal{D}})_{r i}\neq 0$.
        \If{$r\notin\mathcal{R}$}
            \State Let
            \[
            J_r=\{j_1,\ldots,j_s\}
            :=
            \{j:(\mathbf{B}_{2}^{\mathcal{D}})_{rj}\neq 0\}.
            \]
            \For{$\ell=1,\ldots,s-1$}
                \State Add to $\mathcal{B}_{\Omega_2}$
                \[
                h_\ell
                =
                \frac{1}{\sqrt{\ell(\ell+1)}}
                \left(
                \sum_{q=1}^{\ell} e_{j_q}
                -
                \ell e_{j_{\ell+1}}
                \right)
                \]
            \EndFor
            \State $\mathcal{R}\gets\mathcal{R}\cup\{r\}$.
        \EndIf
    \EndIf
\EndFor
\State \Return $\mathcal{B}_{\Omega_2}$.
\end{algorithmic}
\end{algorithm}


\section{Further Discussion on SPE}\label{app:SPE}

As shown in \citet{huang2024stability}, SPE is invariant to the choice of eigenbasis and in particular is invariant to sign flips. We briefly review why that is the case.


Let $\mu_1 < \ldots < \mu_r$ denote the distinct eigenvalues among $\lambda_1,\ldots,\lambda_d$. For each $1\leq j\leq r$, we let $I_j = \{\, i : \lambda_i = \mu_j \,\}$ and let $\mathbf{V}_j \in \mathbb{R}^{n \times |I_j|}$ denote the submatrix of $\mathbf{V}$ whose columns are the corresponding eigenvectors (i.e., the $\mathbf{V}_i$, $i \in I_j$). Writing the kernel $\mathbf{Q}_\ell$ as a sum of rank-one terms yields 
\begin{align*}
    \mathbf{Q}_\ell &=\; \mathbf{V}\,\mathrm{diag}(\phi_\ell(\lambda))\,\mathbf{V}^\top
    =\; \sum_{i=1}^{d} \phi_\ell(\lambda)_i \, \mathbf{V}_i \mathbf{V}_i^\top.
\end{align*}       
Let $i$ and $i'$ be two indices with  $\lambda_i = \lambda_{i'}$, and let $\pi$ be the transposition $\pi$ of $i$ and $i'$. By construction, $\pi\lambda = \lambda$, so permutation equivariance gives $$\phi_\ell(\lambda) = \phi_\ell(\pi\lambda) = \pi\,\phi_\ell(\lambda),$$ 
i.e., $\phi_\ell(\lambda)_i = \phi_\ell(\lambda)_{i'}$. Therefore,  letting $w_{\ell j}$ denote the unique value of $\phi_\ell(\lambda)_i$ for  $i\in I_j$, we may write                                  \begin{equation}                                                               \mathbf{Q}_\ell \;=\; \sum_{j=1}^{r} w_{\ell j}\, \mathbf{V}_j \mathbf{V}_j^\top .         \end{equation}
Each of the terms $\mathbf{V}_j\mathbf{V}_j^\top$ is simply a projection matrix onto the eigenspace of $\mu_j$. Projection does not depend on the choice of eigenbasis and so $\mathbf{Q}_\ell$ is invariant to the choice of basis (in particular it is invariant to sign flips). Therefore SPE is invariant as well. 

\citet{huang2024stability} also show, in Theorem 3.1, that if $\phi_\ell$ and $\rho$ are Lipschitz, then SPE is stable with respect to the input Laplacian $\mathbf{L}$ (with a Lipschitz constant that depends on the eigengap $\gamma=\lambda_{d+1}-\lambda_d$). Furthermore, in Proposition 3.2, it is shown that SPE can universally approximate any continuous basis invariant function. Finally, they also show that SPE makes effective use of the eigenvalues (in addition to the eigenvectors) and can count cycles in Propositions 3.3 and 3.4.  


\section{BiGCNE: Directed Gated GCN with Node and Edge Positional Encodings}\label{app:bigcne}

PathLapPE provides positional information at both the node and edge levels. To leverage our direction-aware edge PE while maintaining a fair comparison with other node-level PEs, 
we use an edge-gated extension of bidirectional message passing, which we denote by \textbf{BiGCNE}. Following the Dir-GNN framework~\citep{rossi2024edge}, incoming information and outgoing information are aggregated separately. In addition, the contribution of each neighboring node is modulated by a learned gate that depends on the positional encoding of the corresponding directed edge.

Let $\mathbf{x}_i$ denote the input features of node $i$, let $\mathbf{p}^{\mathrm{node}}_i$ denote its node-level positional encoding, and let $\mathbf{p}^{\mathrm{edge}}_{ij}$ denote the edge-level positional encoding associated with the directed edge $(i,j)\in\vec E$. The node-level positional encoding is incorporated into the initial node representation by concatenation, that is, $ \tilde{\mathbf{x}}_i = \bigl[ \mathbf{x}_i \,\|\, \mathbf{p}^{\mathrm{node}}_i \bigr]$. Moreover, we let $\mathbf{h}^{(0)}_i$ denote the resulting initial hidden representation after the input embedding. At each BiGCNE layer $\ell$, messages from incoming and outgoing neighbors are computed separately. For an incoming edge $(j,i)\in\vec E$, the corresponding gate is given by \[ \gamma_{ji}^{\mathrm{in},(\ell)} = \sigma\!\left( \bigl(\mathbf{w}_{g,\mathrm{in}}^{(\ell)}\bigr)^\top \mathbf{p}^{\mathrm{edge}}_{ji} + \mathbf b_{g,\mathrm{in}}^{(\ell)} \right), \]
where $\mathbf{w}_{g,\mathrm{in}}^{(\ell)}$ and $\mathbf b_{g,\mathrm{in}}^{(\ell)}$ are learnable weights and biases, respectively. The incoming messages are then obtained by 
\[ \mathbf{m}_{i}^{\mathrm{in},(\ell)} = \operatorname{AGG}_{(j,i)\in\vec E} \left( \gamma_{ji}^{\mathrm{in},(\ell)} \mathbf{W}_{\mathrm{in}}^{(\ell)} \mathbf{h}_{j}^{(\ell)} \right). \]
Similarly, for an outgoing edge $(i,j)\in\vec E$, we define the gate 
\[ \gamma_{ij}^{\mathrm{out},(\ell)} = \sigma\!\left( \bigl(\mathbf{w}_{g,\mathrm{out}}^{(\ell)}\bigr)^\top \mathbf{p}^{\mathrm{edge}}_{ij} + \mathbf b_{g,\mathrm{out}}^{(\ell)} \right), \] and the outgoing messages by
\[ \mathbf{m}_{i}^{\mathrm{out},(\ell)} = \operatorname{AGG}_{(i,j)\in\vec E} \left( \gamma_{ij}^{\mathrm{out},(\ell)} \mathbf{W}_{\mathrm{out}}^{(\ell)} \mathbf{h}_{j}^{(\ell)} \right). \]
Here, $\sigma$ denotes the sigmoid function. We use scalar edge gates, so that each $\gamma_{ij}^{\mathrm{in},(\ell)}$ or $\gamma_{ij}^{\mathrm{out},(\ell)}$ controls the overall contribution of the corresponding message. The incoming and outgoing branches have independent learnable parameters. In our implementation, $\operatorname{AGG}$ is mean aggregation.
The current node representation and the two directional aggregates are then concatenated and combined through a learnable transformation: \[ \mathbf{h}_i^{(\ell+1)} = \operatorname{Dropout} \left( \operatorname{ReLU} \left( \mathbf{W}_{\mathrm{com}}^{(\ell)} \left[ \mathbf{h}_i^{(\ell)} \,\|\, \mathbf{m}_i^{\mathrm{in},(\ell)} \,\|\, \mathbf{m}_i^{\mathrm{out},(\ell)} \right] + \mathbf{b}_{\mathrm{com}}^{(\ell)} \right) \right). \]
When the input and output dimensions agree, a residual connection is added (i.e. $\mathbf{h}_i^{(\ell+1)} =\mathbf{h}_i^{(\ell+1)} + \mathbf{h}_i^{(\ell)}$).
Unlike the classical Dir-GNN, which combines incoming and outgoing messages using a predefined parameter $\alpha$, we use the edge-level positional encoding to modulate message passing through the learned gates at each layer. 
This direction-aware architecture preserves structural and directionality information captured through the PE that would otherwise be diluted by traditional architectures such as GCN. 
A directed gated encoder modulates information flow rather than simply appending the aggregated edge PEs to the node features.

\section{Dataset statistics}
\label{app:dataset_stats}
In Tables~\ref{tab:dataset_stats_node} and~\ref{tab:dataset_stats_graph}, we report basic statistics for the benchmark datasets considered in this paper for node-level and graph-level tasks, respectively. 
\input{tables/dataset_stats_node}
\input{tables/dataset_stats_graph}

\section{Experimental details \& Hyperparameters}
\label{app:hparams}

\subsection{Synthetic flow graphs}
\label{app:flowgraphs}

\textsc{FlowGraph} datasets~\citep{wang2024directed} are generated by creating clusters of nodes with random interconnections that are then sequentially ordered with a probabilistic directional flow defined between consecutive clusters: $n\%$ of the edges flow in one direction (e.g., from cluster $c$ to $c+1$), whereas $(100-n)\%$ flow in the opposite direction. Classes are defined by $n\%$ so that the task is directly tied to learning edge directionality, with each class assigned 1,000 samples. After computing positional encodings, we convert the graph to undirected and use a vanilla (i.e., unidirectional) GINE to force the network to rely only on the PE to solve the task. Hyperparameters for \textsc{FlowGraph} are provided in \Cref{tab:hparams}.

\subsection{WebKB datasets}
\label{app:webkb}

On all \textsc{WebKB} datasets, the number of message passing layers is set to two, and a grid search is performed over the following parameter values: dropout \(\in \{0.0, \ 0.5\}\), number of eigenvalues \(k_{\mathrm{node}}, k_{\mathrm{edge}} \in \{16, \ 32\}\), 
and \(q \in \{0.01, \ 0.05, \ 0.10\}\) for Multi-$q$-MagLapPE. Note that we do not stack the magnetic Laplacian encodings for several values of \(q\) here; the tested values of \(q\) and \(\alpha\) were selected based on the best results reported in \citet{zhang2021magnet} and \citet{rossi2024edge}. Hyperparameters for all \textsc{WebKB} datasets are provided in \Cref{tab:hparams}.

\subsection{Roman-Empire}
\label{app:roman_empire}

\textsc{Roman-Empire} is derived from the Wikipedia entry of the same name, where each node in the graph corresponds to a word in the text, not necessarily unique. An edge exists between two nodes if (i) one follows the other in the text, or (ii) the words are syntactically connected in a sentence. Node features are based on word embeddings, and the task is to predict the syntactic role of each word. Despite the semantic relationships in the graph implying directed edges, \citet{platonov2023a} provide an undirected graph. \citet{rossi2024edge} then provide a directed version using the same general construction but processing the graph to account for directionality; we evaluate our method on this directed version.

We evaluate two Multi-$q$-MagLapPE configurations: the best single \(q\)-value of \(0.05\), after searching over \(q \in \{0.05, 0.1, 0.15\}\), as well as a version that uses all three \(q\) values, following \citet{huang2025good} -- stacking PEs derived from multiple \(q\) values typically outperforms any single \(q\)-value. For all PEs, we search over \(k \in \{8, 16, 32, 64\}\), i.e., the number of nontrivial eigenvectors/eigenvalues, and report the best results. Table~\ref{tab:directed_roman_empire_complete} provides the full results for all \(k\) values.

Finally, since PathLapPE provides \emph{two} sets of encodings, node- and edge-level, we evaluate a more parameter-restricted version that maps each set of encodings to \emph{half} of the node/edge hidden dimension of 512 before concatenation for a fairer comparison with the baselines; we denote this variant PathLapPE (\(256 \times 2\)). For model architectures and hyperparameter selection, we largely follow \citet{platonov2023a}; full experimental details and selected hyperparameters are provided in \Cref{tab:hparams}. 

\paragraph{Complete results.} Here, we present the full set of results summarized in Table~\ref{tab:directed_roman_empire} of the main text. Specifically, we report results for all tested \(k\) values; the final column (\textbf{Best}) corresponds to Table~\ref{tab:directed_roman_empire}. The additional results demonstrate that (i) the superiority of PathLapPE is largely independent of the number of eigenvectors/eigenvalues considered, and that (ii) in almost all cases, the PEs are robust to the choice of \(k\), meaning that \(8\) or \(16\) is likely sufficient for \textsc{Roman-Empire}. Finally, restricting the dimensionality of PathLapPE encodings has no tangible drawback, as the two variants are always close and well within one standard deviation.

\input{tables/directed_roman_empire_complete}

\input{tables/hparams_exps}

\FloatBarrier

\subsection{Open Circuit Benchmark \& High-Level Synthesis}
\label{app:ocb_hls}

\textsc{Open Circuit Benchmark (OCB)} and \textsc{High-Level Synthesis (HLS)} are both inductive, graph-level benchmarks, with three regression tasks defined over each benchmark. For each task, we follow the implementations and optimized hyperparameters for BiGINE and GINE from~\citet{huang2025good}, the latter operating on the undirected version of the graphs.

\textbf{OCB}~\citep{dong2023cktgnn} consists of 8,026 operational amplifier-based (op-amp) analog circuits as directed graphs, and the task is to predict DC gain (Gain), bandwidth (BW) and phase margin (PM) for each circuit. We evaluate two Multi-$q$-MagLapPE configurations, each with ten $q$ values but with phase parameters of 0.01 and 0.05. Following \citet{huang2025good}, we use random 90/5/5\% splits for training, validation and testing over three-stage amplifiers only.\footnote{While \citet{huang2025good} claim that they report two-stage results, their codebase and results are consistent with three-stage amplifiers, while two-stage amplifiers are presented as a separate OOD generalization task.} 

\textbf{HLS}~\citep{wu2022high} represents 18,570 C/C++ programs as directed intermediate representation (IR) graphs for performance analysis on FPGA integrated circuits. The task is to predict digital signal processor (DSP) and look-up table (LUT) usage, and critical path timing slack (CP, permitted delay that does not affect overall program runtime) for each program. We evaluate two Multi-$q$-MagLapPE configurations, each with five $q$ values but with phase parameters of 0.01 and 0.1. We again follow \citet{huang2025good} and randomly select 1,000 graphs each for validation and testing, and train on the remaining graphs.

\paragraph{Experimental setup.} Hyperparameters for all \textsc{OCB} and \textsc{HLS} experiments are provided in Tables~\ref{tab:hparams-amp} and~\ref{tab:hparams-hls}, respectively. We observed several discrepancies between our results and those of \citet{huang2025good}. Primarily, despite using the suggested optimized model hyperparameters and $q$ values, our Multi-$q$-MagLapPE results were marginally worse than those reported in \citet{huang2025good} on BiGINE, and significantly worse for GINE. For GINE in particular, Multi-$q$-MagLapPE suffered from instabilities despite using five or ten $q$ values: some seeds converged to reasonable losses, while others fully diverged over the course of training, as indicated by high error values accompanied by extreme standard deviations. Finally, despite no additional tuning, we noted that our baseline No PE \& LapPE configs for BiGINE $\times$ LUT significantly outperformed the numbers reported in \citet{huang2025good}.

\FloatBarrier

\input{tables/hparams_amp}

\input{tables/hparams_hls}

\subsection{Further implementation details}
\label{app:add_details}
Our codebase is based on PyTorch~\citep{pytorch}, PyTorch Geometric~\citep{Fey_PyG_2_0_Scalable_2025}, Hydra~\citep{Yadan2019Hydra} (released under an MIT license) and PyTorch Lightning~\citep{Falcon_PyTorch_Lightning_2019} (Apache 2.0 license). 

The path complex construction was adapted from \url{https://github.com/SteveHuntsmanBAESystems/PerformantPathHomology} (CC BY-NC-SA 4.0 License). We also use the magnetic Laplacian and SPE implementations in \url{https://github.com/Graph-COM/Multi-q-Maglap}; we also extend their repository for our OCB and HLS evaluations. All experiments were conducted on NVIDIA L40S GPUs, with 8 CPUs and 48 GB RAM.

\FloatBarrier


\section{Computational cost experiments}
\label{app:pe_cost}

PathLapPE uses more memory than Multi-$q$-MagLapPE, which processes $q$ values sequentially at fixed memory cost. For large graphs, sparse eigensolvers~\citep{scipy} reduce this memory at a modest cost in time. \Cref{tab:pe-cost-horizontal} reports time and memory on our benchmarks; on the largest, \textsc{Roman-Empire} (22,662 nodes), sparse PathLapPE takes three seconds and 70 MB, making computational costs negligible compared to training.

\input{tables/pe_cost_table_horizontal}

\end{document}

%% file: math_commands.tex
\usepackage{amsmath,amsfonts,bm}

\def\eqref#1{equation~\ref{#1}}
\def\Eqref#1{Equation~\ref{#1}}

\def\1{\bm{1}}

\DeclareMathAlphabet{\mathsfit}{\encodingdefault}{\sfdefault}{m}{sl}
\SetMathAlphabet{\mathsfit}{bold}{\encodingdefault}{\sfdefault}{bx}{n}



%% file: figures/motifs.tex
\begin{wrapfigure}[11]{r}{0.45\textwidth}
    \centering
    \vspace{-0.5em}

    \resizebox{0.96\linewidth}{!}{%
    \begin{tikzpicture}[
        vertex/.style={
            circle,
            fill=black,
            inner sep=1.7pt
        },
        edge/.style={
            -{Latex[length=1.7mm]},
            line width=0.9pt
        },
        path/.style={
            edge,
            line width=1.2pt
        },
        altpath/.style={
            path,
            densely dashed
        },
        context/.style={
            edge,
            gray!55,
            line width=0.7pt
        },
        every node/.style={font=\small\itshape}
    ]

    \begin{scope}
        \node[vertex,label=below left:$a$]  (a1) at (0,0) {};
        \node[vertex,label=below right:$b$] (b1) at (1.35,0) {};
        \node[vertex,label=above:$c$]       (c1) at (0.675,1.1) {};

        \draw[path]    (a1) -- (b1);
        \draw[path]    (b1) -- (c1);
        \draw[context] (a1) -- (c1);

        \node[font=\small] at (0.675,-0.48) {(a)};
    \end{scope}

    \begin{scope}[xshift=2.25cm]
        \node[vertex,label=below left:$a$] (a2)  at (0,0) {};
        \node[vertex,label=below right:$b$] (b2)  at (1.25,0) {};
        \node[vertex,label=above right:$c$] (c2)  at (1.25,1.1) {};
        \node[vertex,label=above left:$d$] (bp2) at (0,1.1) {};

        \draw[path]    (a2)  -- (b2);
        \draw[path]    (b2)  -- (c2);
        \draw[altpath] (a2)  -- (bp2);
        \draw[altpath] (bp2) -- (c2);

        \node[font=\small] at (0.625,-0.48) {(b)};
    \end{scope}

    \begin{scope}[xshift=4.45cm]
        \node[vertex,label=below:$a$] (a3) at (0,0.3) {};
        \node[vertex,label=below:$b$] (b3) at (1.2,0.3) {};

        \draw[path,bend left=23] (a3) to (b3);
        \draw[path,bend left=23] (b3) to (a3);

        \node[font=\small] at (0.6,-0.48) {(c)};
    \end{scope}

    \end{tikzpicture}%
    }

    \caption{Basis elements of $\Omega_2$: invariant elementary
    $2$-paths in (a) and (c), and the difference of the solid and
    dashed $2$-paths in (b).}
    \label{fig:directed-motifs}

\end{wrapfigure}

%% file: figures/pipeline/pathlappe.tex
\begingroup
\def\pathlappeCheckLib#1{%
  \expandafter\ifx\csname tikz@library@#1@loaded\endcsname\relax
    \PackageError{pathlappe-pipeline}{TikZ library `#1' is not loaded}%
      {Add \string\usetikzlibrary{arrows.meta,positioning,matrix,calc} to your preamble.}%
  \fi}
\pathlappeCheckLib{arrows.meta}\pathlappeCheckLib{positioning}%
\pathlappeCheckLib{matrix}\pathlappeCheckLib{calc}%
\definecolor{bluebd}{HTML}{5AA9E6}  \definecolor{bluebg}{HTML}{F2F8FD} \definecolor{bluehd}{HTML}{DCEEFB}
\definecolor{greenbd}{HTML}{4CAF50} \definecolor{greenbg}{HTML}{F3FAF2}\definecolor{greenhd}{HTML}{DDF1DB}
\definecolor{purpbd}{HTML}{A58AD8}  \definecolor{purpbg}{HTML}{F8F4FD} \definecolor{purphd}{HTML}{E9DFF8}
\definecolor{orangebd}{HTML}{F0A050}\definecolor{orangebg}{HTML}{FFF7EF}\definecolor{orangehd}{HTML}{FDE7D2}
\definecolor{nodeA}{HTML}{8EC8F2} \definecolor{nodeB}{HTML}{F59A9A}
\definecolor{nodeC}{HTML}{A8DB8E} \definecolor{nodeD}{HTML}{F7E27A}
\definecolor{grayfill}{HTML}{EFEFEF}
\definecolor{bluetx}{HTML}{1F3F66}  \definecolor{greentx}{HTML}{1E5128}
\definecolor{purptx}{HTML}{3E2A6B}  \definecolor{orangetx}{HTML}{6B3A12}
\definecolor{edgetx}{HTML}{7A2418}

\tikzset{
  every node/.append style={font=\small},
  vtx/.style={circle,draw=black!70,thick,minimum size=17pt,inner sep=0pt,font=\bfseries\itshape},
  dot/.style={circle,draw=black!60,minimum size=7pt,inner sep=0pt},
  garrow/.style={-{Stealth[length=7pt]},thick},
  flow/.style={-{Stealth[length=8pt,width=7pt]},line width=1.4pt,black!80},
  sub/.style={draw=#1,rounded corners=5pt,fill=white,fill opacity=0.55,text opacity=1},
  mat/.style={matrix of math nodes,left delimiter={[},right delimiter={]},
              inner sep=1pt,nodes={minimum width=13pt,text height=6.5pt,text depth=2pt,inner sep=1pt,font=\small},
              column sep=3pt,row sep=2.5pt},
  lab/.style={font=\footnotesize\itshape,inner sep=1pt},
}

\newcommand{\panel}[9]{%
  \path[draw=#5,fill=#6,rounded corners=8pt,line width=0.9pt] (#1,#2) rectangle (#3,#4);
  \begin{scope}
    \clip[rounded corners=8pt] (#1,#2) rectangle (#3,#4);
    \fill[#7] (#1,#2) rectangle (#3,#2+55);
  \end{scope}
  \draw[#5,line width=0.6pt] (#1,#2+55) -- (#3,#2+55);
  \node[font=\large\bfseries,text=#9] at ({(#1+#3)/2},#2+28) {#8};
}

\begin{tikzpicture}[x=0.4pt,y=-0.4pt]

\panel{22}{17}{328}{427}{bluebd}{bluebg}{bluehd}{1.\ Directed graph}{bluetx}
\node at (175,115) {\large $G=(V,\vec{E})$};
\node[vtx,fill=nodeA] (a) at (174,178) {a};
\node[vtx,fill=nodeC] (c) at (91,253)  {c};
\node[vtx,fill=nodeB] (b) at (256,253) {b};
\node[vtx,fill=nodeD] (d) at (174,309) {d};
\draw[garrow] (c) -- (a);
\draw[garrow] (a) -- (b);
\draw[garrow] (c) -- (b);
\draw[garrow] (d) -- (b);
\node at (175,357) {$V=\{a,b,c,d\}$};
\node at (175,393) {$\vec{E}=\{ca,ab,cb,db\}$};

\panel{357}{17}{855}{427}{bluebd}{bluebg}{bluehd}{2.\ Path complex bases}{bluetx}
\draw[sub=bluebd] (368,85) rectangle (495,415);
\draw[sub=bluebd] (504,85) rectangle (706,415);
\draw[sub=bluebd] (715,85) rectangle (843,415);
\node[font=\large] at (431,106) {$\Omega_0$};
\node[font=\small] at (431,139) {(nodes)};
\node[dot,fill=nodeA] (n0a) at (423,195) {}; \node[lab,right=3pt of n0a] {$a$};
\node[dot,fill=nodeC] (n0c) at (404,243) {}; \node[lab,left=3pt of n0c]  {$c$};
\node[dot,fill=nodeB] (n0b) at (459,243) {}; \node[lab,right=3pt of n0b] {$b$};
\node[dot,fill=nodeD] (n0d) at (423,291) {}; \node[lab,right=3pt of n0d] {$d$};
\node at (431,347) {$B_0=$};
\node at (431,378) {$(a,b,c,d)$};
\node[font=\large] at (605,106) {$\Omega_1$};
\node[font=\small] at (605,139) {(edges)};
\newcommand{\miniedge}[6]{
  \draw[#5,thick] (#1-26,#2) -- (#1+26,#2);
  \node[dot,fill=#3] at (#1-26,#2) {};
  \node[dot,fill=#4] at (#1+26,#2) {};
  \node[lab] at (#1,#2+25) {#6};}
\node[font=\small] at (553,181) {$c\!\to\! a$}; \miniedge{553}{203}{nodeC}{nodeA}{greenbd}{$ca$}
\node[font=\small] at (655,181) {$a\!\to\! b$}; \miniedge{655}{203}{nodeA}{nodeB}{bluebd!70!black}{$ab$}
\node[font=\small] at (553,269) {$c\!\to\! b$}; \miniedge{553}{291}{nodeC}{nodeB}{red!70!black}{$cb$}
\node[font=\small] at (655,269) {$d\!\to\! b$}; \miniedge{655}{291}{nodeD}{nodeB}{red!60!black}{$db$}
\node at (605,352) {$B_1=$};
\node at (605,382) {$(ca,ab,cb,db)$};
\node[font=\large] at (779,106) {$\Omega_2$};
\node[font=\small] at (779,139) {(2-paths)};
\node[font=\small] at (779,198) {$c\!\to\! a\!\to\! b$};
\draw[green!50!black,thick] (740,222) -- (779,222);
\draw[bluebd!70!black,thick] (779,222) -- (818,222);
\draw[-{Stealth[length=5pt]},dashed,thick] (740,226) to[bend right=50] (815,230);
\node[dot,fill=nodeC] at (740,222) {};
\node[dot,fill=nodeA] at (779,222) {};
\node[dot,fill=nodeB] at (818,222) {};
\node[lab] at (779,278) {$cab$};
\node at (779,347) {$B_2=$};
\node at (779,381) {$(cab)$};

\panel{887}{17}{1342}{427}{greenbd}{greenbg}{greenhd}{3.\ Boundary matrices}{greentx}
\draw[sub=greenbd] (898,85) rectangle (1127,415);
\draw[sub=greenbd] (1137,85) rectangle (1332,415);
\node[font=\large] at (1013,110) {$\mathbf{B}_1$};
\node[font=\small] at (1013,141) {(edges $\to$ nodes)};
\coordinate (pB1) at (1025,290);
\begin{scope}[y=0.4pt]\matrix[mat] (B1) at (pB1) {
   1 & -1 &  0 &  0 \\
   0 &  1 &  1 &  1 \\
  -1 &  0 & -1 &  0 \\
   0 &  0 &  0 & -1 \\};\end{scope}
\foreach \c/\t in {1/ca,2/ab,3/cb,4/db} \node[lab,above=6pt] at (B1-1-\c.north) {$\t$};
\foreach \r/\t in {1/a,2/b,3/c,4/d} \node[lab,anchor=east] at ($(B1-\r-1.west)+(-11pt,0)$) {$\t$};
\node[font=\large] at (1235,110) {$\mathbf{B}_2$};
\node[font=\small] at (1235,141) {(2-paths $\to$ edges)};
\coordinate (pB2) at (1255,290);
\begin{scope}[y=0.4pt]\matrix[mat] (B2) at (pB2) { 1 \\ 1 \\ -1 \\ 0 \\};\end{scope}
\node[lab,above=6pt] at (B2-1-1.north) {$cab$};
\foreach \r/\t in {1/ca,2/ab,3/cb,4/db} \node[lab,anchor=east] at ($(B2-\r-1.west)+(-11pt,0)$) {$\t$};

\panel{1374}{17}{1828}{427}{purpbd}{purpbg}{purphd}{4.\ Path Laplacians}{purptx}
\draw[sub=purpbd] (1385,87) rectangle (1592,415);
\draw[sub=purpbd] (1602,87) rectangle (1820,415);
\node[font=\normalsize] at (1488,122) {$\mathbf{L}_0=\mathbf{B}_1\mathbf{B}_1^{\top}$};
\node[font=\small] at (1488,157) {(node Laplacian)};
\coordinate (pL0) at (1488,290);
\begin{scope}[y=0.4pt]\matrix[mat] (L0) at (pL0) {
   2 & -1 & -1 &  0 \\
  -1 &  3 & -1 & -1 \\
  -1 & -1 &  2 &  0 \\
   0 & -1 &  0 &  1 \\};\end{scope}
\node[font=\small] at (1711,122) {$\mathbf{L}_1=\mathbf{B}_2\mathbf{B}_2^{\top}+\mathbf{B}_1^{\top}\mathbf{B}_1$};
\node[font=\small] at (1711,157) {(edge Laplacian)};
\coordinate (pL1) at (1711,290);
\begin{scope}[y=0.4pt]\matrix[mat] (L1) at (pL1) {
  3 & 0 & 0 & 0 \\
  0 & 3 & 0 & 1 \\
  0 & 0 & 3 & 1 \\
  0 & 1 & 1 & 2 \\};\end{scope}

\draw[flow] (330,212) -- (355,212);
\draw[flow] (857,212) -- (885,212);
\draw[flow] (1344,212) -- (1372,212);
\draw[flow] (1601,429) -- (1601,455);  

\panel{1206}{457}{1828}{858}{bluebd}{bluebg}{bluehd}{5.\ Spectral decomposition}{bluetx}
\draw[sub=bluebd] (1216,523) rectangle (1512,846);
\draw[sub=bluebd] (1522,523) rectangle (1818,846);
\node[font=\normalsize\bfseries] at (1364,557) {Node eigenpairs (\textit{p}\,=\,0)};
\node[font=\large] at (1364,628) {$\mathbf{L}_0u_i^{(0)}=\lambda_i^{(0)}u_i^{(0)}$};
\node[font=\normalsize,align=center] at (1364,710) {Keep $k_{\mathrm{node}}$ smallest\\eigenpairs};
\node[font=\Large] at (1364,792) {$\mathbf{U}_0,\Lambda_0$};
\node[font=\normalsize\bfseries] at (1670,557) {Edge eigenpairs (\textit{p}\,=\,1)};
\node[font=\large] at (1670,628) {$\mathbf{L}_1u_i^{(1)}=\lambda_i^{(1)}u_i^{(1)}$};
\node[font=\normalsize,align=center] at (1670,710) {Keep $k_{\mathrm{edge}}$ smallest\\eigenpairs};
\node[font=\Large] at (1670,792) {$\mathbf{U}_1,\Lambda_1$};

\panel{578}{457}{1168}{858}{orangebd}{orangebg}{orangehd}{6.\ PathLapPE features}{orangetx}
\draw[draw=bluebd,fill=bluebg,rounded corners=6pt] (588,520) rectangle (1158,646);
\node[font=\normalsize\bfseries,text=bluetx,anchor=west] at (596,543) {Node SPE (\textit{p}\,=\,0)};
\node[font=\normalsize,anchor=west] at (596,584)
  {$\mathbf{K}^{(0)}_\ell=\sum_i \phi_\ell(\lambda^{(0)})_i\,u_i^{(0)}u_i^{(0)\top},\ \ \ell=1,\dots,m$};
\node[font=\normalsize,anchor=west] at (596,624)
  {$\boldsymbol{\to}\ z_v^{(0)}=\rho\big(\mathbf{K}^{(0)}_1,\dots,\mathbf{K}^{(0)}_m\big)_v$};
\draw[draw=orangebd,fill=orangebg,rounded corners=6pt] (588,656) rectangle (1158,848);
\node[font=\normalsize\bfseries,text=edgetx,anchor=west] at (596,679) {Edge SPE (\textit{p}\,=\,1)};
\node[font=\normalsize,anchor=west] at (596,720)
  {$\mathbf{K}^{(1)}_\ell=\sum_i \phi_\ell(\lambda^{(1)})_i\,u_i^{(1)}u_i^{(1)\top},\ \ \ell=1,\dots,m$};
\node[font=\normalsize,anchor=west] at (596,761)
  {$\boldsymbol{\to}\ z_e^{(1)}=\mathrm{MLP}\big(\big[\mathbf{K}^{(1)}_\ell(e,e),\,\sum_{f\in\mathcal{S}(e)}\mathbf{K}^{(1)}_\ell(e,f),$};
\node[font=\normalsize,anchor=west] at (596,797)
  {$\hphantom{\boldsymbol{\to}\ z_e^{(1)}=\mathrm{MLP}\big(\big[}\sum_{f\in\mathcal{P}(e)}\mathbf{K}^{(1)}_\ell(f,e)\big]_{\ell=1}^{m}\big)$};
\node[font=\small,text=black!65,anchor=west] at (596,830)
  {$\mathcal{S}(e)$, $\mathcal{P}(e)$: successor and predecessor edges of $e$};

\panel{22}{457}{540}{858}{purpbd}{purpbg}{purphd}{7.\ GNN and downstream task}{purptx}
\draw[fill=grayfill,draw=black!70,rounded corners=6pt] (111,562) rectangle (451,640);
\node[font=\large\bfseries] at (281,601) {GNN architecture};
\draw[-{Stealth[length=6pt]},thick] (281,642) -- (281,678);
\draw[fill=purphd!60,draw=black!70,rounded corners=6pt] (42,680) rectangle (520,834);
\node[font=\large\bfseries] at (281,732) {Graph/Node/Edge-level task};
\node[font=\normalsize] at (281,784) {e.g., node classification, graph regression};

\draw[flow] (1204,648) -- (1170,648);
\draw[flow] (576,648) -- (542,648);

\end{tikzpicture}
\endgroup

%% file: tables/webkb.tex
\begin{table}
  \caption{
  Node classification accuracy (mean test accuracy \(\%\)) on the
  \textsc{WebKB} datasets.
  The best result within each model family is in bold and the second best
  is underlined.
  }
  \label{tab:WebKB}
  \centering
  \resizebox{\linewidth}{!}{
  \begin{tabular}{llcccc}
    \toprule
    \textbf{Model}
    & \textbf{PE}
    & \textsc{Cornell}
    & \textsc{Wisconsin}
    & \textsc{Texas}
    & \textbf{Avg. Rank} $\downarrow$ \\
    \midrule

    \multirow{6}{*}{\rotatebox[origin=c]{90}{GAT}}
    & No PE
    & \(48.37 \pm 7.59\)
    & \(52.55 \pm 4.11\)
    & \(59.73 \pm 4.12\)
    & 5.17 \\

    & RWPE
    & \(54.05 \pm 7.96\)
    & \(53.72 \pm 6.55\)
    & \(60.00 \pm 4.38\)
    & 3.17 \\

    & LapPE
    & \(54.05 \pm 7.75\)
    & \(52.35 \pm 8.82\)
    & \(59.73 \pm 4.84\)
    & 4.67 \\

    & Multi-\(q\)-MagLapPE,
      \(q \in \{0.01,0.05,0.10\}\)
    & \(\underline{55.68 \pm 7.67}\)
    & \(53.14 \pm 7.01\)
    & \(\underline{60.27 \pm 4.24}\)
    & 2.67 \\

    & PathLapPE (ours)
    & \(53.78 \pm 6.17\)
    & \(\mathbf{56.28 \pm 5.31}\)
    & \(59.46 \pm 4.59\)
    & 4.00 \\

    & PathLapPE (ours, w/ EdgeSPE)
    & \(\mathbf{55.95 \pm 8.55}\)
    & \(\underline{55.88 \pm 7.69}\)
    & \(\mathbf{61.35 \pm 4.04}\)
    & \(\mathbf{1.33}\) \\

    \midrule

    \multirow{6}{*}{\rotatebox[origin=c]{90}{BiGINE}}
    & No PE
    & \(54.32 \pm 4.84\)
    & \(58.43 \pm 3.43\)
    & \(56.22 \pm 9.52\)
    & 5.83 \\

    & RWPE
    & \(55.41 \pm 8.66\)
    & \(58.43 \pm 5.37\)
    & \(61.35 \pm 6.25\)
    & 5.17 \\

    & LapPE
    & \(\mathbf{64.60 \pm 5.41}\)
    & \(76.86 \pm 9.33\)
    & \(64.60 \pm 9.57\)
    & 3.17 \\

    & Multi-\(q\)-MagLapPE,
      \(q \in \{0.01,0.05,0.10\}\)
    & \(62.97 \pm 10.74\)
    & \(\underline{79.02 \pm 6.34}\)
    & \(69.19 \pm 7.56\)
    & 3.00 \\

    & PathLapPE (ours)
    & \(\underline{64.05 \pm 7.21}\)
    & \(78.43 \pm 8.57\)
    & \(\mathbf{71.62 \pm 8.76}\)
    & 2.33 \\

    & PathLapPE (ours, w/ EdgeSPE)
    & \(\mathbf{64.60 \pm 7.90}\)
    & \(\mathbf{80.59 \pm 5.88}\)
    & \(\underline{69.73 \pm 10.57}\)
    & \(\mathbf{1.50}\) \\

    \midrule

    \multirow{6}{*}{\rotatebox[origin=c]{90}{BiGCNE}}
    & No PE
    & \(60.27 \pm 5.98\)
    & \(69.41 \pm 6.08\)
    & \(65.14 \pm 8.40\)
    & 5.33 \\

    & RWPE
    & \(58.92 \pm 6.60\)
    & \(65.88 \pm 6.80\)
    & \(68.11 \pm 7.07\)
    & 5.67 \\

    & LapPE
    & \(76.75 \pm 4.45\)
    & \(85.49 \pm 5.08\)
    & \(81.89 \pm 4.04\)
    & 3.83 \\

    & Multi-\(q\)-MagLapPE,
      \(q \in \{0.01,0.05,0.10\}\)
    & \(\underline{77.30 \pm 3.17}\)
    & \(85.49 \pm 4.45\)
    & \(\mathbf{83.24 \pm 7.07}\)
    & 2.33 \\

    & PathLapPE (ours)
    & \(77.03 \pm 5.87\)
    & \(\underline{86.67 \pm 5.13}\)
    & \(\mathbf{83.24 \pm 7.07}\)
    & 2.17 \\

    & PathLapPE (ours, w/ EdgeSPE)
    & \(\mathbf{78.65 \pm 4.50}\)
    & \(\mathbf{87.26 \pm 3.84}\)
    & \(\underline{82.16 \pm 4.80}\)
    & \(\mathbf{1.67}\) \\

    \bottomrule
  \end{tabular}
  }
\end{table}

%% file: tables/directed_roman_empire.tex
\begin{table}
    \centering
    \caption{Node classification results (mean test acc. \(\%\)) on \textsc{Roman-Empire}. Best result is \textbf{bolded}, second best is \underline{underlined}. Best $k$ (num. eigenpairs) reported, Table~\ref{tab:directed_roman_empire_complete} lists results for all $k$ values.}
    \resizebox{\linewidth}{!}{
    \begin{tabular}{lccccc}\toprule
         \textbf{PE}&  \textbf{BiGINE}&  \textbf{BiGINE-shared}&  \textbf{GAT}&  \textbf{BiGAT}& \textbf{BiGCNE}\\\midrule
         Base (No PE)  &  $89.48 \pm 0.40$&  $83.03 \pm 0.63$&  $83.01 \pm 0.35$&  $90.36 \pm 0.30$& $87.79 \pm 0.47$\\
         LaplacianPE   &  $90.97 \pm 0.42$&  $87.68 \pm 0.42$&  $84.19 \pm 0.48$&  $\mathbf{91.15 \pm 0.40}$& $89.58 \pm 0.57$\\
         Multi-$q$-MagLapPE, $q=0.05$&  $90.95 \pm 0.36$&  $87.66 \pm 0.24$&  $83.93 \pm 0.56$&  $91.13 \pm 0.44$& $89.63 \pm 0.36$\\
         Multi-$q$-MagLapPE, $\#q=3, q\in \{0.05, 0.1, 0.15\}$&  $90.95 \pm 0.33$&  $87.28 \pm 0.39$&  \underline{$84.32 \pm 0.39$}&  $91.10 \pm 0.45$& $89.82 \pm 0.35$\\\midrule
         PathLapPE (ours) &  \underline{$91.33 \pm 0.23$}&  $\mathbf{89.50 \pm 0.36}$&  $\mathbf{84.39 \pm 0.39}$&  $91.10 \pm 0.46$& $\mathbf{90.29 \pm 0.32}$\\
         PathLapPE (ours, w/ EdgeSPE) &  $\mathbf{91.37 \pm 0.32}$&  \underline{$89.17 \pm 0.56$}&  $83.90 \pm 0.42$&  \underline{$91.14 \pm 0.27$}& \underline{$90.06 \pm 0.45$}\\ 
         \bottomrule
    \end{tabular}
    }
    \label{tab:directed_roman_empire}
\end{table}

%% file: tables/amp_hsl.tex
\begin{table}[t]
\centering
\caption{Test set results for \textsc{Open Circuit Benchmark} and
\textsc{High-level Synthesis}. Mean $\pm$ std reported across five seeds. Best result is \textbf{bolded}, second best is \underline{underlined}. Multi-$q$-MagLapPE uses $\#q=10$ for OCB, and $\#q=5$ for HLS.}
\label{tab:amp_hsl}
\resizebox{\linewidth}{!}{
\begin{tabular}{@{}c@{\hspace{12pt}}lccccccc@{}}
\toprule
 & & \multicolumn{3}{c}{\textsc{OCB} (RMSE $\downarrow$)} & \multicolumn{3}{c}{\textsc{HLS} (MSE $\downarrow$)} & \textbf{Avg.} \\
\cmidrule(lr){3-5} \cmidrule(lr){6-8}
 & \textbf{PE} & \textsc{Gain} & \textsc{BW} & \textsc{PM} & \textsc{DSP} & \textsc{LUT} & \textsc{CP} & \textbf{Rank} $\downarrow$ \\
\midrule
\multirow{6}{*}{\rotatebox[origin=c]{90}{BiGINE}}
 & No PE     & \val{0.394}{0.007} & \val{4.497}{0.111} & \val{1.123}{0.011} & \bval{2.254}{0.068} & \val{1.793}{0.174} & \val{1.152}{0.077} & $4.08$ \\
 & LapPE     & \val{0.394}{0.014} & \val{4.223}{0.077} & \val{1.114}{0.022} & \val{2.445}{0.216} & \bval{1.767}{0.030} & \val{1.158}{0.132} & $3.75$ \\
 & Multi-$q$-MagLapPE, $\#q=\{5, 10\}, q=0.01$     & \val{0.393}{0.010} & \uval{4.156}{0.143} & \uval{1.112}{0.015} & \val{2.325}{0.103} & \val{1.783}{0.052} & \val{1.186}{0.107} & \underline{$3.17$} \\
 & Multi-$q$-MagLapPE, $\#q=\{5, 10\}, q \in \{0.05, 0.1\}$ & \val{0.388}{0.015} & \val{4.215}{0.128} & \val{1.121}{0.018} & \val{2.335}{0.137} & \val{1.856}{0.134} & \val{1.383}{0.348} & $4.17$ \\
 & PathLapPE (ours) & \uval{0.384}{0.014} & \val{4.236}{0.102} & \val{1.147}{0.005} & \val{2.625}{0.121} & \val{2.096}{0.083} & \bval{1.113}{0.093} & $4.33$ \\
 & PathLapPE (ours, w/ EdgeSPE) & \bval{0.367}{0.005} & \bval{4.036}{0.092} & \bval{1.102}{0.026} & \uval{2.322}{0.083} & \uval{1.771}{0.131} & \uval{1.118}{0.069} & $\mathbf{1.50}$ \\
\midrule
\multirow{6}{*}{\rotatebox[origin=c]{90}{GINE}}
 & No PE     & \val{0.415}{0.007} & \val{4.817}{0.110} & \val{1.126}{0.019} & \bval{2.662}{0.215} & \uval{2.009}{0.092} & \bval{0.654}{0.034} & \underline{$2.92$} \\
 & LapPE     & \val{0.411}{0.020} & \val{4.349}{0.076} & \val{1.127}{0.030} & \val{2.738}{0.279} & \val{2.031}{0.139} & \uval{0.680}{0.025} & $3.33$ \\
 & Multi-$q$-MagLapPE, $\#q=\{5, 10\}, q=0.01$     & \uval{0.410}{0.017} & \val{4.328}{0.159} & \bval{1.117}{0.015} & \val{2.885}{0.251} & \val{21.35}{37.64} & \val{1.639}{2.152} & $3.50$ \\
 & Multi-$q$-MagLapPE, $\#q=\{5, 10\}, q \in \{0.05, 0.1\}$ & \val{1.109}{0.226} & \val{44.98}{39.94} & \val{1.120}{0.016} & \val{2.910}{0.034} & \val{13.67}{18.62} & \val{2.795}{2.281} & $5.17$ \\
 & PathLapPE (ours) & \val{0.415}{0.021} & \uval{4.252}{0.146} & \val{1.149}{0.024} & \val{3.043}{0.209} & \val{2.353}{0.095} & \val{0.691}{0.047} & $4.25$ \\
 & PathLapPE (ours, w/ EdgeSPE) & \bval{0.379}{0.009} & \bval{4.226}{0.128} & \uval{1.119}{0.019} & \uval{2.717}{0.137} & \bval{1.946}{0.061} & \val{0.697}{0.046} & $\mathbf{1.83}$ \\
\bottomrule
\end{tabular}
}
\end{table}

%% file: tables/dataset_stats_node.tex
\begin{table}[h]
    \centering
\caption{Dataset statistics for transductive node-level tasks}
\label{tab:dataset_stats_node}
    \begin{tabular}{cccccc}\toprule
         \textbf{Dataset}&  \textsc{Cornell}&  \textsc{Wisconsin}&  \textsc{Texas}& \textsc{Roman-Empire}\\
         \midrule
         \# Nodes&  183&  251& 183  & 22,662 \\
         \# Edges&  295&  499& 325   & 44,363 \\
         \# Features&  1,703&  1,703& 1,703 & 300 \\
         \# Classes&  5&  5& 5 & 18 \\ 
         License& CC-BY 4.0& CC-BY 4.0& CC-BY 4.0& MIT \\
         \bottomrule
    \end{tabular}
\end{table}

%% file: tables/dataset_stats_graph.tex
\begin{table}[h]
    \centering
\caption{Dataset statistics for inductive graph-level tasks}
\label{tab:dataset_stats_graph}
\resizebox{\linewidth}{!}{
    \begin{tabular}{cccccc}\toprule
         \textbf{Dataset}&  \textsc{FlowGraph2}&  \textsc{FlowGraph3}&  \textsc{FlowGraph6}& \textsc{OCB}& \textsc{HLS}\\
         \midrule
         \# Graphs&  2,000&  3,000&  6,000&  8,026\textsuperscript{a}& 18,570\\
         Avg. \# Nodes&  114.46&  111.66&  111.40&  9.89& 94.70\\
         Avg. \# Edges&  150.30&  150.70&  149.93&  15.02& 122.12\\
         \# Node Features&  0&  0&  0&  3& 7\\
         \# Edge Features&  0&  0&  0&  0& 2\\
         Task&  Binary class.&  3-way class.&  6-way class.& Regression& Regression\\
         License& CC BY 4.0& CC BY 4.0& CC BY 4.0&  CC BY 4.0& CC BY 4.0\\
         \bottomrule
    \end{tabular}
}
\begin{minipage}{\textwidth}\footnotesize
\textsuperscript{a}\,The full dataset consists of 10,000 graphs, but we focus on three-stage amplifier graphs only -- two-stage amplifiers are presented as a separate OOD generalization task. While \citet{huang2025good} denote that they report two-stage results, their codebase and results are consistent with three-stage amplifiers instead.
\smallskip

\end{minipage}
\end{table}

%% file: tables/directed_roman_empire_complete.tex
\begin{table}[htbp]
\centering
\caption{Results across encoder architectures and positional encodings on \textsc{Roman-Empire}. Bold indicates the best $k$ for each variant. In the last column, the top three PEs for each encoder are denoted with \textcolor{YellowOrange}{\textbf{gold}}, \textcolor{CadetBlue}{\textbf{silver}} and \textcolor{RawSienna}{\textbf{bronze}}.}
\label{tab:directed_roman_empire_complete}
\setlength{\tabcolsep}{5pt}
\renewcommand{\arraystretch}{1.15}
\small
\resizebox{\linewidth}{!}{
\begin{tabular}{@{}llcccc!{\color{bestgray}\vrule width 0.8pt}c@{}}
\toprule
\textbf{Encoder} & \textbf{PE Variant} & $k{=}8$ & $k{=}16$ & $k{=}32$ & $k{=}64$ & \textbf{Best} \\
\midrule
 
\multirow{7}{*}{\rotatebox[origin=c]{90}{\parbox{2.2cm}{\centering\scriptsize BiGINE-shared}}}
 & Base (No PE)  & $83.03 \pm 0.63$ & $83.03 \pm 0.63$ & $83.03 \pm 0.63$ & $83.03 \pm 0.63$ & $83.03 \pm 0.63$ \\
 & LaplacianPE   & $\mathbf{87.68 \pm 0.42}$ & $87.44 \pm 0.23$ & $87.60 \pm 0.31$ & $87.53 \pm 0.42$ & $87.68 \pm 0.42$ \\
 & Multi-\(q\)-MagLapPE, $q=0.05$ & $\mathbf{87.66 \pm 0.24}$ & $87.47 \pm 0.36$ & $87.52 \pm 0.42$ & $87.64 \pm 0.40$ & $87.66 \pm 0.24$ \\
 & Multi-\(q\)-MagLapPE, $q \in \{0.05, 0.1, 0.15\}$ & $87.10 \pm 0.68$ & $\mathbf{87.28 \pm 0.39}$ & $87.17 \pm 0.52$ & $87.16 \pm 0.53$ & $87.28 \pm 0.39$ \\
 & PathLapPE (256 $\times$ 2) & $89.31 \pm 0.44$ & $\mathbf{89.43 \pm 0.42}$ & $89.36 \pm 0.41$ & $89.25 \pm 0.37$ & $\second{89.43 \pm 0.42}$ \\
 & PathLapPE       & $\mathbf{89.50 \pm 0.36}$ & $89.15 \pm 0.29$ & $89.12 \pm 0.41$ & $89.25 \pm 0.41$ & $\first{89.50 \pm 0.36}$ \\
 & PathLapPE (EdgeSPE) & $89.11 \pm 0.48$ & $\mathbf{89.17 \pm 0.56}$ & $89.01 \pm 0.57$ & $89.05 \pm 0.45$ & $\third{89.17 \pm 0.56}$ \\ 
\cmidrule(r){1-6}
 
\multirow{7}{*}{\rotatebox[origin=c]{90}{\parbox{2.2cm}{\centering\scriptsize BiGINE}}}
 & Base (No PE)  & $89.48 \pm 0.40$ & $89.48 \pm 0.40$ & $89.48 \pm 0.40$ & $89.48 \pm 0.40$ & $89.48 \pm 0.40$ \\
 & LaplacianPE   & $90.93 \pm 0.44$ & $\mathbf{90.97 \pm 0.42}$ & $90.84 \pm 0.38$ & $90.89 \pm 0.31$ & $90.97 \pm 0.42$ \\
 & Multi-\(q\)-MagLapPE, $q=0.05$ & $90.93 \pm 0.34$ & $90.90 \pm 0.30$ & $\mathbf{90.95 \pm 0.36}$ & $90.86 \pm 0.32$ & $90.95 \pm 0.36$ \\
 & Multi-\(q\)-MagLapPE, $q \in \{0.05, 0.1, 0.15\}$ & $90.86 \pm 0.23$ & $\mathbf{90.95 \pm 0.33}$ & $90.94 \pm 0.37$ & $90.91 \pm 0.30$ & $90.95 \pm 0.33$ \\
 & PathLapPE (256 $\times$ 2) & $91.28 \pm 0.30$ & $91.30 \pm 0.31$ & $\mathbf{91.38 \pm 0.29}$ & $91.31 \pm 0.34$ & $\first{91.38 \pm 0.29}$ \\
 & PathLapPE       & $91.28 \pm 0.30$ & $91.03 \pm 0.28$ & $91.19 \pm 0.32$ & $\mathbf{91.33 \pm 0.23}$ & $\third{91.33 \pm 0.23}$ \\
 & PathLapPE (EdgeSPE) & $91.28 \pm 0.38$ & $\mathbf{91.37 \pm 0.32}$ & $91.30 \pm 0.29$ & $91.37 \pm 0.26$ & $\second{91.37 \pm 0.32}$ \\ 
\cmidrule(r){1-6}
 
\multirow{7}{*}{\rotatebox[origin=c]{90}{\parbox{2.2cm}{\centering\scriptsize GAT}}}
 & Base (No PE)  & $83.01 \pm 0.35$ & $83.01 \pm 0.35$ & $83.01 \pm 0.35$ & $83.01 \pm 0.35$ & $83.01 \pm 0.35$ \\
 & LaplacianPE   & $83.94 \pm 0.82$ & $83.69 \pm 0.68$ & $\mathbf{84.19 \pm 0.48}$ & $84.06 \pm 0.47$ & $\third{84.19 \pm 0.48}$ \\
 & Multi-\(q\)-MagLapPE, $q=0.05$ & $83.76 \pm 0.41$ & $83.79 \pm 0.54$ & $83.75 \pm 0.45$ & $\mathbf{83.93 \pm 0.56}$ & $83.93 \pm 0.56$ \\
 & Multi-\(q\)-MagLapPE, $q \in \{0.05, 0.1, 0.15\}$ & $\mathbf{84.32 \pm 0.39}$ & $84.26 \pm 0.65$ & $84.03 \pm 0.42$ & $83.79 \pm 0.60$ & $\second{84.32 \pm 0.39}$ \\
 & PathLapPE (256 $\times$ 2) & $83.72 \pm 0.64$ & $\mathbf{84.17 \pm 0.31}$ & $83.87 \pm 0.61$ & $83.96 \pm 0.52$ & $84.17 \pm 0.31$ \\
 & PathLapPE       & $\mathbf{84.39 \pm 0.39}$ & $83.93 \pm 0.48$ & $84.20 \pm 0.45$ & $83.97 \pm 0.48$ & $\first{84.39 \pm 0.39}$ \\
 & PathLapPE (EdgeSPE) & $83.80 \pm 0.61$ & $\mathbf{83.90 \pm 0.42}$ & $83.69 \pm 0.76$ & $83.54 \pm 0.54$ & $83.90 \pm 0.42$ \\
\cmidrule(r){1-6}
 
\multirow{7}{*}{\rotatebox[origin=c]{90}{\parbox{2.2cm}{\centering\scriptsize BiGAT}}}
 & Base (No PE)  & $90.36 \pm 0.30$ & $90.36 \pm 0.30$ & $90.36 \pm 0.30$ & $90.36 \pm 0.30$ & $90.36 \pm 0.30$ \\
 & LaplacianPE   & $\mathbf{91.15 \pm 0.40}$ & $91.03 \pm 0.39$ & $90.97 \pm 0.59$ & $91.06 \pm 0.44$ & $\second{91.15 \pm 0.40}$ \\
 & Multi-\(q\)-MagLapPE $q=0.05$ & $91.12 \pm 0.37$ & $90.93 \pm 0.25$ & $90.74 \pm 0.60$ & $\mathbf{91.13 \pm 0.44}$ & $91.13 \pm 0.44$ \\
 & Multi-\(q\)-MagLapPE, $q \in \{0.05, 0.1, 0.15\}$ & $91.06 \pm 0.45$ & $90.95 \pm 0.44$ & $\mathbf{91.10 \pm 0.45}$ & $91.04 \pm 0.35$ & $91.10 \pm 0.45$ \\
 & PathLapPE (256 $\times$ 2) & $91.01 \pm 0.36$ & $90.93 \pm 0.42$ & $\mathbf{91.19 \pm 0.33}$ & $90.94 \pm 0.42$ & $\first{91.19 \pm 0.33}$ \\
 & PathLapPE       & $90.99 \pm 0.34$ & $90.91 \pm 0.37$ & $91.07 \pm 0.44$ & $\mathbf{91.10 \pm 0.46}$ & $91.10 \pm 0.46$ \\
 & PathLapPE (EdgeSPE) & $90.95 \pm 0.35$ & $91.05 \pm 0.36$ & $90.91 \pm 0.49$ & $91.14 \pm 0.27$ & $\third{91.14 \pm 0.27}$ \\ 
\cmidrule(r){1-6}
 
\multirow{7}{*}{\rotatebox[origin=c]{90}{\parbox{2.2cm}{\centering\scriptsize BiGCNE}}}
 & Base (No PE)  & $87.79 \pm 0.47$ & $87.79 \pm 0.47$ & $87.79 \pm 0.47$ & $87.79 \pm 0.47$ & $87.79 \pm 0.47$ \\
 & LaplacianPE   & $89.31 \pm 0.51$ & $\mathbf{89.58 \pm 0.57}$ & $89.40 \pm 0.41$ & $89.46 \pm 0.53$ & $89.58 \pm 0.57$ \\
 & Multi-\(q\)-MagLapPE, $q=0.05$ & $89.50 \pm 0.53$ & $\mathbf{89.63 \pm 0.36}$ & $89.51 \pm 0.39$ & $89.53 \pm 0.52$ & $89.63 \pm 0.36$ \\
 & Multi-\(q\)-MagLapPE, $q \in \{0.05, 0.1, 0.15\}$ & $\mathbf{89.82 \pm 0.35}$ & $89.66 \pm 0.35$ & $89.77 \pm 0.59$ & $89.76 \pm 0.45$ & $89.82 \pm 0.35$ \\
 & PathLapPE (256 $\times$ 2) & $90.14 \pm 0.31$ & $\mathbf{90.24 \pm 0.28}$ & $90.21 \pm 0.40$ & $90.20 \pm 0.48$ & $\second{90.24 \pm 0.28}$ \\
 & PathLapPE       & $90.22 \pm 0.34$ & $90.24 \pm 0.56$ & $\mathbf{90.29 \pm 0.32}$ & $90.23 \pm 0.32$ & $\first{90.29 \pm 0.32}$ \\
 & PathLapPE (EdgeSPE) & $89.99 \pm 0.44$ & $\mathbf{90.06 \pm 0.45}$ & $90.04 \pm 0.45$ & $89.99 \pm 0.48$ & $\third{90.06 \pm 0.45}$ \\
 
\bottomrule
\end{tabular}
}
\end{table}

\clearpage

%% file: tables/hparams_exps.tex
\begin{table}[htbp]
\centering
\small
\setlength{\tabcolsep}{5pt}
\caption{Key model and training hyperparameters for the \textsc{FlowGraph}, \textsc{WebKB} and \textsc{Roman-Empire} datasets. All \textsc{FlowGraph} and \textsc{WebKB} datasets share the same configs.}
\label{tab:hparams}
\resizebox{\linewidth}{!}{
\begin{tabular}{lccc}
\toprule
 & \textsc{FlowGraph} & \textsc{WebKB} & \textsc{Roman-Empire} \\
\midrule
\multicolumn{4}{l}{\textbf{\emph{Data}}}\\
Task & graph cls. & node cls. & node cls. \\
Graphs & 2,000 / 3,000 / 6,000 & 1 & 1 \\
Classes & 2 / 3 / 6 & 5 & 18 \\
Input feature dim. & 1\textsuperscript{a} & 1,703 & 300 \\
Batching & batch size 32 & full graph & full graph \\
Splits & 500 / 300 / 200 per class & 10 fixed splits & 10 fixed splits \\
\midrule
\multicolumn{4}{l}{\textbf{\emph{Optimization}}}\\
Optimizer & AdamW & Adam & AdamW \\
Learning rate & $8\cdot10^{-3}$ & $5\cdot10^{-3}$ & $3\cdot10^{-5}$ \\
Weight decay & $0$ & $5\cdot10^{-4}$ & $0$ \\
LR schedule & none & ReduceLROnPlateau\textsuperscript{b} & none \\
Max epochs & 200 & 3,000 & 1,000 \\ 
Early stopping (val/acc) & -- & 500 & 400 \\
\midrule
\multicolumn{4}{l}{\textbf{\emph{Backbone}}}\\
Layers (pre / MP / post) & 1 / 4 / 1 & 1 / 2 / 1 & 1 / 3 / 1 \\
Stage type & skipsum & -- & skipsum \\
Jumping knowledge & -- & max & -- \\
FFN after convolution & -- & -- & \checkmark \\
Hidden dim.\ $d$ & 32 & 128 & 512 \\
Normalization & BatchNorm & L2 Norm & LayerNorm \\
Dropout & 0.0 & \{0.0, 0.5\} & 0.2 \\
Activation & ReLU & ReLU & GELU \\
Graph pooling & mean & -- & -- \\
\midrule
\multicolumn{4}{l}{\textbf{\emph{Positional encodings}}}\\
LapPE $k$ & 64 & \{16, 32\} & \{8, 16, 32, 64\} \\
MagLapPE $k\,/\,q\,/\,\#q$ & 64 / 0.05 / \{1, 3, 5\} & \{16, 32\} / \{0.01, 0.05, 0.1\} / 1 & \{8, 16, 32, 64\} / \{0.05, 0.1, 0.15\} / \{1, 3\} \\
PathLapPE $k$ (node / edge) & 64 & \{16, 32\} / \{16, 32\} & \{8, 16, 32, 64\} \\
Encoder & SPE (+ EdgeSPE) & SPE (+ EdgeSPE) & SPE (+ EdgeSPE) \\
\bottomrule
\end{tabular}
}
\\[3pt]
\begin{minipage}{\textwidth}\footnotesize
\textsuperscript{a}\,FlowGraph has no node features; we attach a
constant $x_i = 1$.\\
\textsuperscript{b}\,$\gamma = 0.1$, patience 200, monitoring validation loss.\\

\end{minipage}
\end{table}

%% file: tables/hparams_amp.tex
\begin{table}[htbp]
\centering
\caption{Key model and training hyperparameters for the \textsc{Open Circuit
Benchmark} experiments. All backbones (both uni- and bidirectional GINE) share a single configuration per target.}
\label{tab:hparams-amp}
\small
\begin{tabular}{lccc}
\toprule
 & \textsc{Gain} & \textsc{BW} & \textsc{PM} \\
\midrule
\textit{\textbf{Data}} \\
Input node features   & \multicolumn{3}{c}{3} \\ 
Input edge features               & \multicolumn{3}{c}{--} \\
Graphs                      & 8,026 & 8,026 & 8,026 \\
Splits                      & $90/5/5$\% & $90/5/5$\% & $90/5/5$\% \\
Batch size                  & 128 & 64 & 64 \\
\midrule
\textit{\textbf{Optimization}} \\
Optimizer                   & \multicolumn{3}{c}{Adam ($\beta_1, \beta_2 = 0.9, 0.999$)} \\
Learning rate               & $6.7\cdot10^{-3}$ & $6.5\cdot10^{-3}$ & $2.1\cdot10^{-3}$ \\
Weight decay                & 0 & 0 & 0 \\
LR schedule                 & \multicolumn{3}{c}{StepLR (step 100, $\gamma = 0.95$)} \\
Loss                        & MSE & MSE & MSE \\
Max epochs                  & \multicolumn{3}{c}{300} \\
Model selection             & \multicolumn{3}{c}{best val.\ MSE checkpoint (no early stopping)} \\
\midrule
\textit{\textbf{Backbone}} \\
Layers (pre / MP / post)    & 1 / 4\textsuperscript{a} / 3 & 1 / 3\textsuperscript{a} / 2 & 1 / 4\textsuperscript{a} / 4 \\
Hidden dim.\ $d$            & 96 & 192 & 288 \\
Normalization               & \multicolumn{3}{c}{BatchNorm (inside each conv MLP)} \\
Dropout                     & 0.1 & 0.0 & 0.2 \\
Activation                  & ReLU & ReLU & ReLU \\
Pooling                     & sum & sum & sum \\
\bottomrule
\end{tabular}
\\[2pt]
\footnotesize
\textsuperscript{a}\,First over the device-grouping subgraph, then over the full circuit graph.
\end{table}

%% file: tables/hparams_hls.tex
\begin{table}[htbp]
\centering
\caption{Key model and training hyperparameters for the \textsc{High-Level
Synthesis} experiments. Unlike \textsc{OCB}, the tuned configuration depends on whether
the backbone is unidirectional (GINE) or bidirectional (BiGINE), so both are reported per target. A single value in a row indicates the parameter applies to all configurations.}
\label{tab:hparams-hls}
\small
\setlength{\tabcolsep}{4pt}
\begin{tabular}{lcccccc}
\toprule
 & \multicolumn{2}{c}{\textsc{DSP}} & \multicolumn{2}{c}{\textsc{LUT}} & \multicolumn{2}{c}{\textsc{CP}} \\
\cmidrule(lr){2-3}\cmidrule(lr){4-5}\cmidrule(lr){6-7}
 & Uni-GNN & Bi-GNN & Uni-GNN & Bi-GNN & Uni-GNN & Bi-GNN \\
\midrule
\textit{\textbf{Data}} \\
Categorical node features & \multicolumn{6}{c}{7} \\ 
Categorical edge features & \multicolumn{6}{c}{2} \\
Graphs (CDFG)             & \multicolumn{6}{c}{18,570} \\
Splits                    & \multicolumn{6}{c}{16,570 / 1,000 / 1,000} \\
Batch size                & 32 & 64 & 32 & 64 & 32 & 64 \\
\midrule
\textit{\textbf{Optimization}} \\
Optimizer      & \multicolumn{6}{c}{Adam ($\beta_1, \beta_2 = 0.9, 0.999$)} \\
Learning rate  & $5.92\cdot10^{-3}$ & $3.10\cdot10^{-3}$ & $1.50\cdot10^{-3}$ & $2.30\cdot10^{-3}$ & $5.92\cdot10^{-3}$ & $3.10\cdot10^{-3}$ \\
Weight decay   & \multicolumn{6}{c}{0} \\
LR schedule    & \multicolumn{6}{c}{StepLR (step 100, $\gamma = 0.95$)} \\
Loss           & \multicolumn{6}{c}{MSE} \\
Max epochs     & \multicolumn{6}{c}{400} \\
\midrule
\textit{\textbf{Backbone}} \\
Layers (pre / MP / post) & $1/2/2$ & $1/4/3$ & $1/4/2$ & $1/4/2$ & $1/2/2$ & $1/4/3$ \\
Hidden dim.\ $d$         & 56 & 84 & 84 & 56 & 56 & 84 \\
Normalization            & \multicolumn{6}{c}{BatchNorm (inside each conv MLP)} \\
Dropout                  & 0.1 & 0.2 & 0.3 & 0.1 & 0.1 & 0.2 \\
Activation               & \multicolumn{6}{c}{ReLU} \\
Pooling                  & sum & sum & sum & sum & max & max \\
\bottomrule
\end{tabular}
\end{table}

%% file: tables/pe_cost_table_horizontal.tex
\begin{table}[H]
\centering
\small
\setlength{\tabcolsep}{5pt}
\caption{Positional encoding precompute cost per graph. Each dataset is denoted with the optimal eigensolver strategy used for decomposition and the type of memory measured.}
\label{tab:pe-cost-horizontal}
\resizebox{\linewidth}{!}{
\begin{tabular}{lcccccc}
\toprule
 & \multicolumn{2}{c}{\textsc{Roman-Empire}} & \multicolumn{2}{c}{\textsc{FlowGraph}} & \multicolumn{2}{c}{\textsc{OCB}} \\
 & \multicolumn{2}{c}{\footnotesize shift-invert, heap mem.} & \multicolumn{2}{c}{\footnotesize CPU dense, heap mem.} & \multicolumn{2}{c}{\footnotesize GPU dense, GPU mem.} \\
\cmidrule(lr){2-3}\cmidrule(lr){4-5}\cmidrule(lr){6-7}
PE & Time (s) & Mem (MB) & Time (ms) & Mem (MB) & Time (ms) & Mem (MB) \\
\midrule
LapPE & 0.154\,$\pm$\,0.007 & 4.49 & 3.00\,$\pm$\,0.23 & 0.24 & 2.20\,$\pm$\,0.13 & 0.56 \\
Multi-$q$-MagLapPE, $\#q=1$ & 0.257\,$\pm$\,0.001 & 8.50 & 2.96\,$\pm$\,0.17 & 0.25 & 2.21\,$\pm$\,0.10 & 1.12 \\
Multi-$q$-MagLapPE, $\#q=3$ & 0.704\,$\pm$\,0.001 & 11.3 & 6.59\,$\pm$\,0.44 & 0.27 & 4.23\,$\pm$\,0.24 & 1.12 \\
Multi-$q$-MagLapPE, $\#q=5$ & 1.074\,$\pm$\,0.016 & 14.1 & 10.15\,$\pm$\,0.67 & 0.30 & 6.23\,$\pm$\,0.38 & 1.12 \\
Multi-$q$-MagLapPE, $\#q=10$ & 2.102\,$\pm$\,0.022 & 21.0 & 19.43\,$\pm$\,1.81 & 0.33 & 11.06\,$\pm$\,1.09 & 1.12 \\
PathLapPE (ours) & 3.053\,$\pm$\,0.015 & 70.5 & 22.46\,$\pm$\,1.15 & 0.62 & 14.31\,$\pm$\,0.83 & 0.60 \\
\bottomrule
\end{tabular}
}
\\
\end{table}